\documentclass[10pt,journal]{IEEEtran} \usepackage[T1]{fontenc}
\usepackage{bm} \usepackage{times} \usepackage{amssymb,bbm}
\usepackage{todonotes}
\usepackage[linesnumbered,ruled,vlined]{algorithm2e}
\usepackage{amsthm} \usepackage[english]{babel}
\usepackage{amsmath,textcomp,enumerate,bbm,latexsym}
\usepackage{algpseudocode} \usepackage{graphics} \usepackage{epstopdf}
\usepackage{aliascnt} \usepackage{csquotes} \usepackage{multirow}
\usepackage{subfigure} \usepackage{url} \usepackage{array}
\usepackage{caption} \usepackage{balance}
\usepackage{color}
\usepackage{dblfloatfix}
\newenvironment{enumeratenumeric}{\begin{enumerate}[1.]
  }{\end{enumerate}}
\newcolumntype{C}[1]{>{\centering\let\newline\\\arraybackslash\hspace{0pt}}m{#1}}

\usepackage[colorlinks=false]{hyperref}

\newtheorem{theorem}{Theorem}

\newcolumntype{K}[1]{>{\centering\arraybackslash}p{#1}}

\DeclareMathOperator{\sign}{sign}


\newcommand{\trans}[1]{{#1}^{\ensuremath{\mathsf{T}}}} 

\ifCLASSINFOpdf
\usepackage[nocompress]{cite} \else
\usepackage{cite} \fi

%
\ifCLASSINFOpdf
\else
\fi

\usepackage{fancyhdr} \setlength{\headheight}{15.2pt}
\pagestyle{fancy}

\hyphenation{op-tical}

\begin{document}

\title{Robust and Fast Decoding of High-Capacity Color QR Codes for Mobile Applications}
\author{Zhibo~Yang,
        Huanle~Xu,~\IEEEmembership{Member,~IEEE,}
        Jianyuan~Deng,\\
        Chen~Change~Loy,~\IEEEmembership{Senior~Member,~IEEE,}
        and~Wing~Cheong~Lau,~\IEEEmembership{Senior~Member,~IEEE}
\thanks{This work is supported in part by MobiTeC of The Chinese University of Hong Kong and the Innovation and Technology Fund of the Hong Kong SAR Government (Project no. ITS/300/13).}
\thanks{Z. Yang, W. C. Lau and C. C. Loy are with the Department of Information Engineering, The Chinese University of Hong Kong, Hong Kong, China. E-mail: \{zbyang, wclau, ccloy\}@ie.cuhk.edu.hk.}
\thanks{H. Xu is is with the College of Computer Science and Technology, Dongguan University of Technology. E-mail: xuhl@dgut.edu.cn.}
\thanks{J. Deng is with the School of Pharmacy, The Chinese University of Hong Kong, Hong Kong, China. E-mail: jianyuandeng@gmail.com}
\thanks{Part of this work has been presented in IEEE ICIP 2016 \cite{yang2016towards}.}}

\maketitle

\begin{abstract}
The use of color in QR codes brings extra data capacity, but also inflicts tremendous challenges on the decoding process due to chromatic distortion---cross-channel color interference and illumination variation. Particularly, we further discover a new type of chromatic distortion in high-density color QR codes---cross-module color interference---caused by the high density which also makes the geometric distortion correction more challenging.
%
To address these problems, we propose two approaches, LSVM-CMI and QDA-CMI, which jointly model these different types of chromatic distortion. Extended from SVM and QDA, respectively, both LSVM-CMI and QDA-CMI optimize over a particular objective function and learn a color classifier. {Furthermore, a robust geometric transformation method and several pipeline refinements are proposed to boost the decoding performance for mobile applications.} We put forth and implement a framework for high-capacity color QR codes equipped with our methods, called HiQ.
To evaluate the performance of HiQ, 
we collect a challenging large-scale color QR code dataset, CUHK-CQRC, which consists of 5390 high-density color QR code samples. 
The comparison with the baseline method \cite{blasinski2013per} on CUHK-CQRC shows that HiQ at least outperforms \cite{blasinski2013per} by 188\% in decoding success rate and 60\% in bit error rate. Our implementation of HiQ in iOS and Android also demonstrates the effectiveness of our framework in real-world applications. 
\end{abstract}

\begin{IEEEkeywords}
color QR code, color recovery, color interference, high capacity, high density, robustness, chromatic distortion
\end{IEEEkeywords}

\IEEEpeerreviewmaketitle

\thispagestyle{fancy}

\section{Introduction}\label{sec:intro}
\IEEEPARstart{I}n recent years, {\color{black}QR codes \cite{liu2008recognition}\cite{soon2008qr}} have gained great popularity because of their quick response to scanning, robustness to damage, readability from any directions. However, the data capacity of existing QR codes has severely hindered their applicability, e.g., adding authentication mechanisms to QR codes to protect users from information leakage \cite{li2015authpaper}. To increase the data capacity of QR codes, leveraging color is arguably the most direct and inexpensive approach.

Unfortunately, it remains an open technical challenge to decode color QR codes in a robust manner, especially for high-density ones.
The difficulties of increasing the capacity/footprint ratio boil down to two types of distortion which can be further exacerbated for high-density color QR codes. One is \textit{geometric distortion}: standard QR code decoding method corrects geometric distortion via perspective projection \cite{hartley2003multiple}, which estimates a projection matrix from four spatial patterns (the so-called finder pattern and alignment pattern) in the four corners of the QR codes. In practice, it is very likely that the estimated positions of the patterns are inaccurate. While small deviation is tolerable when decoding low-density QR codes, perspective projection becomes unreliable for high-density ones as each module (module refers to the small square unit that makes up QR code) only contains few pixels. Consequently, minor errors are amplified and propagated through geometric transformation which ultimately leads to decoding failure.

The other is \textit{chromatic distortion}: For monochrome QR codes, a simple thresholding method is adequate to recover the color since there are only two colors between which the chromatic contrast is often high. However, color recovery for color QR codes, which may consist of 4, 8, or even 16 colors, becomes nontrivial due to chromatic distortion. 
We characterize the chromatic distortion of color QR codes in three different forms based on their physical or optical causes, see Fig. \ref{fig:inter} for illustration:

{\color{black}
\begin{itemize}
\item \textbf{Cross-channel interference (CCI)}. Printing colorants (i.e., C, M and Y colorant layers) tend to interfere with the captured image channels (i.e., R, G and B channels) \cite{blasinski2013per}, see Fig. \ref{fig:cci}. CCI scatters the distribution of each color, and thus leads to difficulties in differentiating one color from the others;
  
\item \textbf{Illumination variation}. Color varies dramatically under different lighting conditions \cite{gijsenij2012improving} (see Fig. \ref{fig:illuvar}). Unfortunately, 
it is inevitable for real-world QR code applications to operate under a wide range of lighting conditions;

\item \textbf{Cross-module interference (CMI)}. For high-density color QR codes, the printing colorants in neighboring data modules may spill over and substantially distort the color of the central module {during the printing process\footnote{{\color{black}It is worth noting that CMI is different from chromatic distortion arisen from the camera side, such as chromatic aberration of lens, motion blur and low image resolution, which  are out of the scope of this work.}}}, see Fig. \ref{fig:cmi}. CMI has negligible influence over low-density color QR codes due to their relatively large data module size. In this case, the cross-module contamination is only limited to the periphery of each module and does not affect its center portion over which color recovery is performed. 

\end{itemize}
}
\begin{figure*}[t]
\centering
\captionsetup{justification=centering,margin=2cm}
\subfigure[Cross-channel color interference.]{
\label{fig:cci}
\includegraphics[width=.475\textwidth]{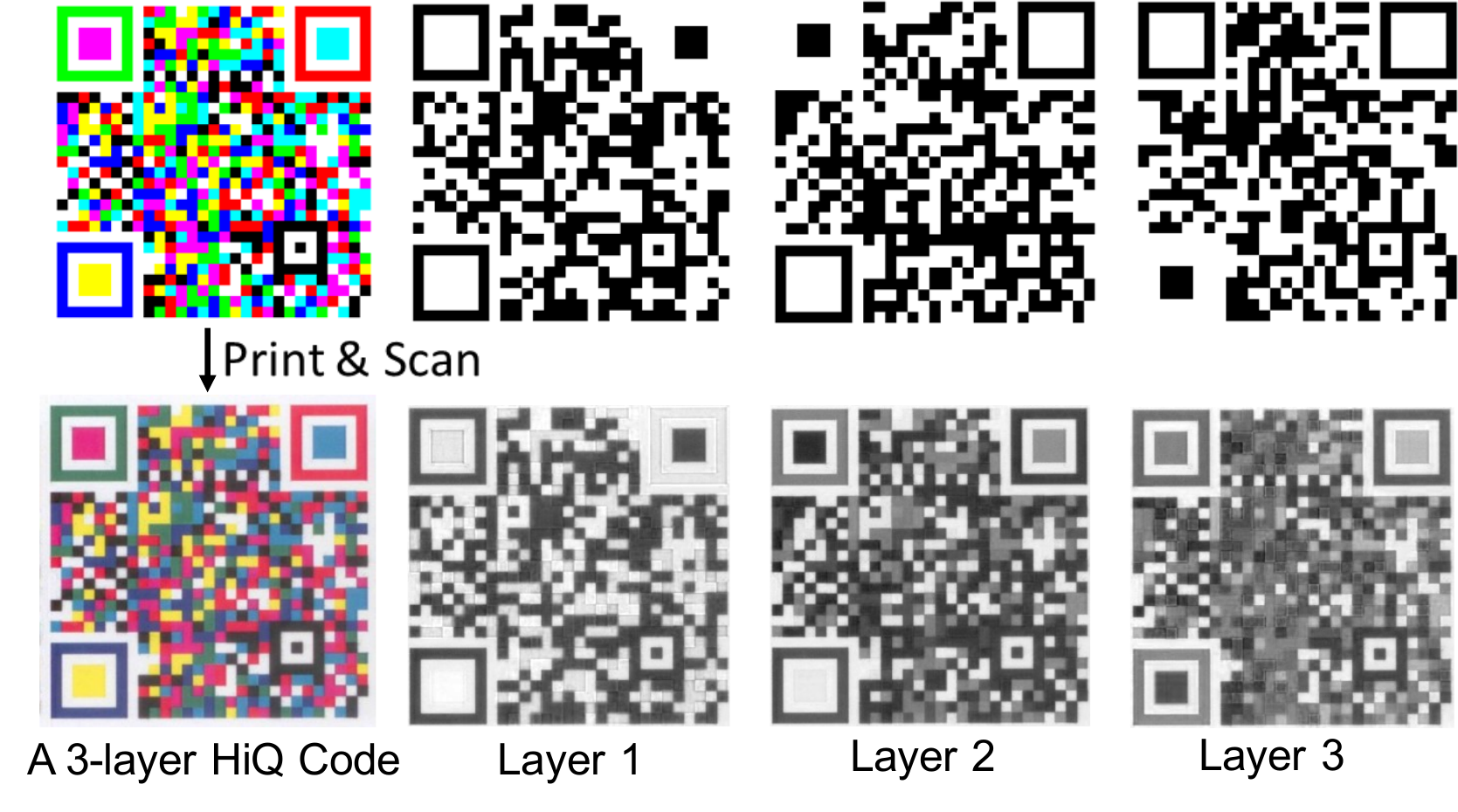}}
 \subfigure[Color distributions of one color QR code under incandescent (left) and outdoor (right) lighting.]{
\label{fig:illuvar}
\includegraphics[width=.3\textwidth]{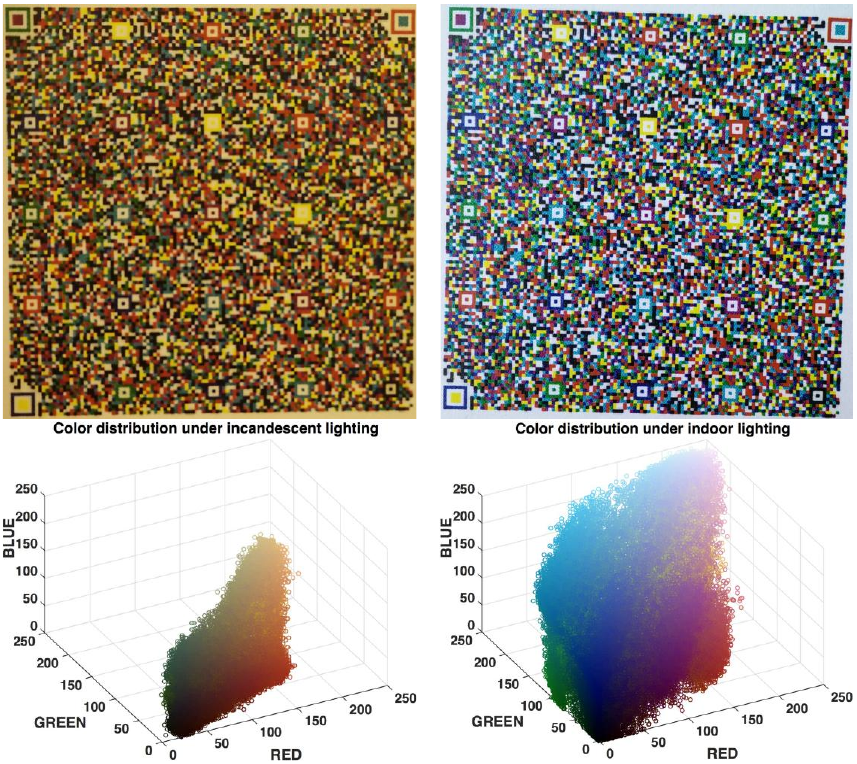}}
\hspace{0.1cm}
\subfigure[Cross-module color interference.]{
\label{fig:cmi}
\includegraphics[width=.15\textwidth]{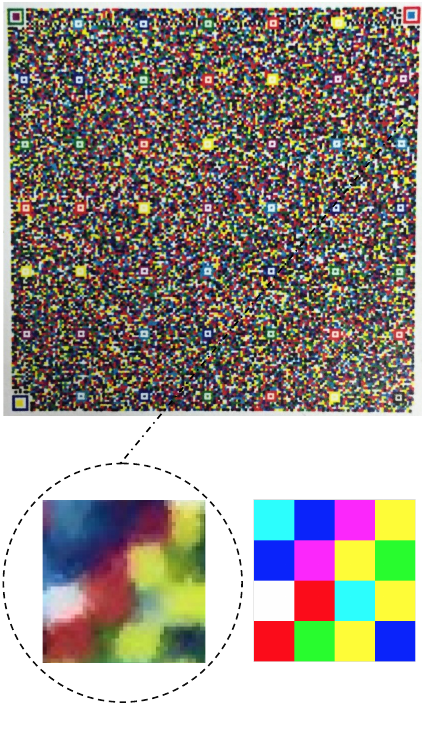}}
\caption{Three types of chromatic distortion of color QR codes.} 
\label{fig:inter}
\end{figure*}

To the best of our knowledge, CMI has never been studied before and is especially important for decoding high-density color QR codes, while CCI and illumination variation have been addressed by prior arts \cite{blasinski2013per}\cite{grillo2010high}. To address illumination variation, they take an online approach, namely, they learn a color recovery model for every captured QR code image. However, we have found that online approach brings huge computational burden to mobile devices and it is difficult to collect enough clean training data for high-density color QR codes due to CMI and other external causes, e.g., dirt, damage and nonuniform illumination on the reference symbols from which the training data are collected. 

In this paper, we adopt an offline learning approach, and model the cross-module interference together with the fallout of illumination variation and the cross-channel interference by formulating the color recovery problem with an optimization framework. In particular, we propose two models, QDA-CMI and LSVM-CMI, which are extended from quadratic discriminant analysis (QDA) and support vector machine (SVM), respectively. A robust geometric transformation method is further developed to accurately correct geometric distortion. Besides, we propose a new color QR code framework, HiQ, which constructs a color QR code by combining multiple monochrome QR codes together in a layered manner to maintain the structure of conventional QR code, and thus to preserve the strength of their design. We refer the color QR codes constructed under HiQ framework as HiQ codes in the remainder of this paper. 
To summarize, this paper has primarily made the following technical contributions:
\begin{itemize}
{\color{black}
\item \textbf{Chromatic distortion correction. }To the best of our knowledge, this paper is the first one that discovers the cross-module color interference in high-density QR codes and establishes models to simultaneously correct different types of chromatic distortion.

\item \textbf{Robust Geometric Transformation and pipeline refinements. }We improve existing geometric distortion correction scheme and propose a robust geometric transformation method for high-density QR codes. We also present several pipeline refinements (e.g., color normalization, spatial randomization and block accumulation) to further boost the decoding performance for mobile applications.}

\item \textbf{Working implementation and applications. }We propose a high-capacity QR code framework, HiQ, which provides users and developers with great flexibility of encoding and decoding QR codes with high capacity. Experimental results show that with HiQ we can encode 2900 bytes, 7700 bytes and 8900 bytes of data in a region as small as ${26\times 26\,\rm{mm}^2}$, ${38\times 38\,\rm{mm}^2}$ and ${42\times 42\,\rm{mm}^2}$, respectively, and can robustly decode the data within 3 seconds using off-the-shelf mobile phone. We release our implementation of one HiQ codes generator\footnote{Available at \url{http://www.authpaper.net/}.} and two mobile decoders on Apple App Store\footnote{iOS App \url{https://itunes.apple.com/hk/app/authpaper-qr-code-scanner/id998403254?ls=1&mt=8}.} and Google Play\footnote{Android App \url{https://play.google.com/store/apps/details?id=edu.cuhk.ie.authbarcodescanner.android}.}.

\item \textbf{A large-scale color QR code dataset. }For benchmarking color QR code decoding algorithms, we create a challenging color QR code dataset, CUHK-CQRC\footnote{Available at \url{http://www.authpaper.net/colorDatabase/index.html}.}, which consists of 5390 samples of color QR codes captured by different mobile phones under different lighting conditions. This is the first large-scale color QR code dataset that is publicly available. We believe many researches and applications will benefit from it.

\end{itemize}

The remainder of this paper is structured as follows. Section \ref{sec:bg_rw} reviews the existing color 2D barcodes systems and motivates the need for a new color QR code system. Section \ref{sec:HiQ} describes the construction of a color QR code under the HiQ framework. {\color{black}Section \ref{sec:decoding} and Section \ref{sec:geotrans} present the details of the proposed models for chromatic distortion correction and geometric transformation, respectively. Additional pipeline refinements for boosting the decoding performance are discussed in Section \ref{sec:impl}}. Section \ref{sec:exp} compares HiQ with the baseline method \cite{blasinski2013per} on CUHK-CQRC. Our implementations of HiQ in both desktop simulation and actual mobile platforms are described to demonstrate the practicality of the proposed algorithms. Section \ref{sec:concl} concludes this paper.

\section{Related Work}\label{sec:bg_rw}
Recent years have seen numerous attempts on using color to increase the capacity of traditional 2D barcodes \cite{blasinski2013per}\cite{grillo2010high}\cite{kato2009novel}\cite{onoda2005hierarchised}\cite{parikh2008localization}\cite{querini2013color} (see Fig. \ref{fig:barcode_examples} for illustration). Besides, color feature has also been imposed on traditional 2D barcodes purely for the purpose of improving the attractiveness of 2D barcodes such as PiCode \cite{chen2016picode}. As a real commercial product, Microsoft High Capacity Color Barcode (HCCB) \cite{parikh2008localization}, encodes data using color triangles with a predefined color palette. However, A. Grillo et. al. \cite{grillo2010high} report fragility in localizing and aligning HCCB codes. The only available HCCB decoder, Microsoft Tag, requires Internet accessibility to support server-based decoding. 

Recent projects like COBRA \cite{hao2012cobra}, Strata \cite{hu2014strata} and FOCUS \cite{hermans2016focus} support visual light communications by streaming a sequence of 2D barcodes from a display to the camera of the receiving smartphone. However, the scope of their work is different from ours. They focus on designing new 2D (color or monochrome) barcode systems that are robust for message streaming (via video sequences) between relatively large smartphone screens (or other displays) and the capturing camera. In contrast, our work focuses on tackling the critical challenges such as CMI and CCI to support fast and robust decoding when dense color QR codes are printed on paper substrates with maximal data-capacity-per-unit-area ratio.  


H. Bagherinia and R. Manduchi \cite{bagherinia2011theory} propose to model color variation under various illuminations using a low-dimensional subspace, e.g., principal component analysis, without requiring reference color patches. T. Shimizu et. al. \cite{shimizu2011color} propose a 64-color 2D barcode and augment the RGB color space using seed colors which functions as references to facilitate color classification. Their method uses 15-dim or 27-dim feature both in training and testing which is prohibitively time-consuming for mobile devices in real-world applications. To decode color barcodes from blurry images, H. Bagherinia and R. Manduchi \cite{bagherinia2014novel} propose an iterative method to address the blur-induced color mixing from neighboring color patches. However, their method takes more than 7 seconds on a desktop with Intel i5-2520M CPU @ 2.50GHz to process a single image, which is completely unacceptable for mobile applications.

Other researchers have extended traditional QR codes to color QR codes in order to increase the data capacity \cite{blasinski2013per}\cite{grillo2010high}\cite{querini2013color}. HCC2D \cite{grillo2010high} encodes multiple data bits in each color symbol and adds extra color symbols around the color QR codes to provide reference data in the decoding process. The per-colorant-channel color  barcodes framework (PCCC) \cite{blasinski2013per} encodes data in three independent monochrome QR codes which represent the three channels in the CMY color space during printing. A color interference cancellation algorithm is also proposed in \cite{blasinski2013per} to perform color recovery. However, both HCC2D and PCCC suffer from the following drawbacks:
\begin{itemize}
\item Parameters of the color recovery model should be learned for every captured image before decoding. Our experiments show such approach not only brings unnecessary computational burden to mobile devices, but also easily introduces bias in the color recovery process since any dirt and damage on the reference symbols, or even nonuniform lighting can easily make color QR codes impossible to decode; 
\item Their evaluations do not study the effectiveness of their proposed schemes on high-density color QR codes\footnote{The color QR samples used by \cite{blasinski2013per} only hold no more than 150 bytes within an area whose size ranges from $15$ to $30\,\rm{mm}^2$.}, neither do they discover or address the problem of cross-module interference; 
\item They do not investigate the limitations regarding smartphone-based implementations.
\end{itemize}

\begin{figure}[t]
\centering
 \subfigure[COBRA code.]{
\label{fig:cobra}
\includegraphics[height=.124\textwidth]{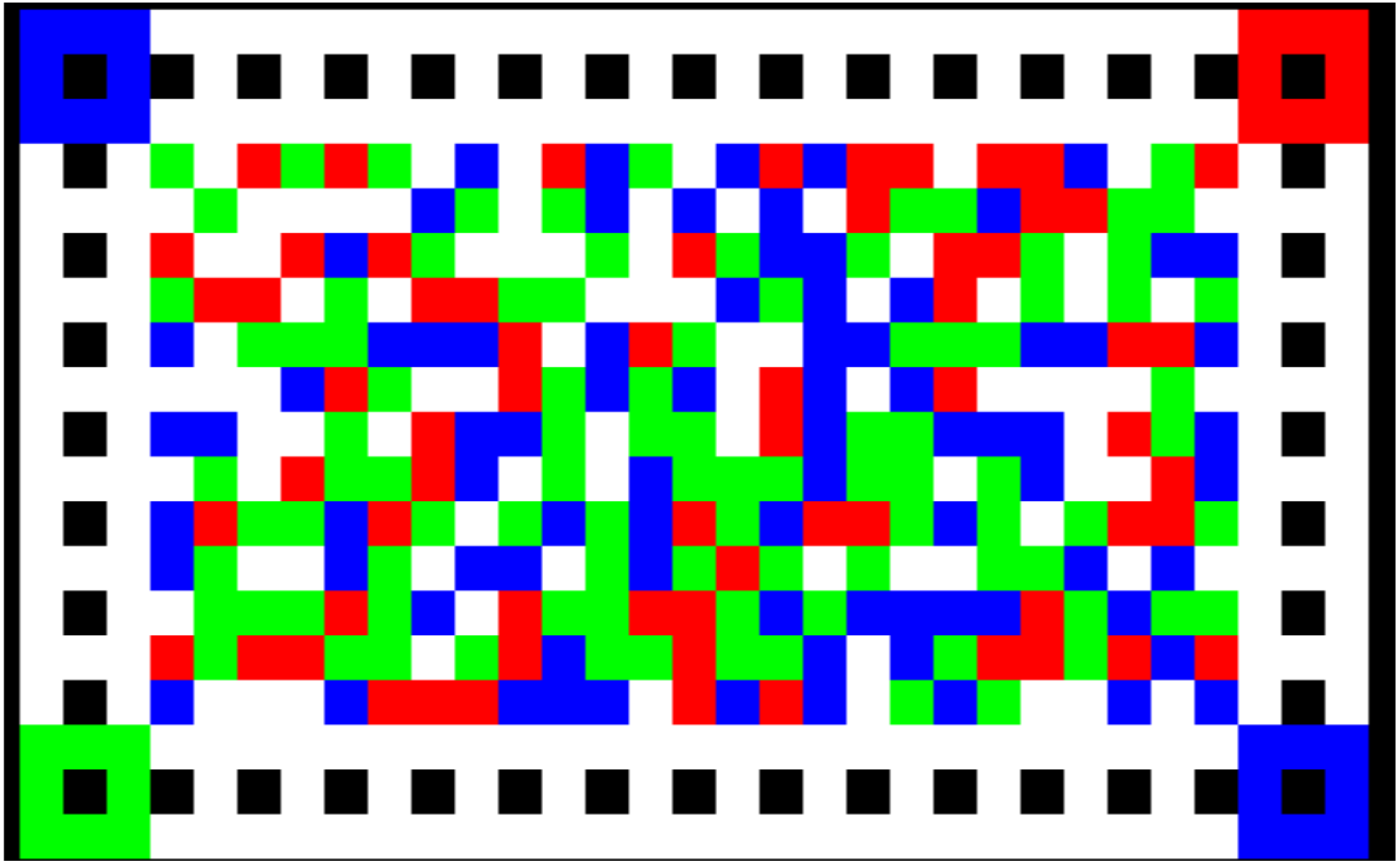}}
\subfigure[Monochrome QR code (green).]{
\label{fig:mqr_c}
\includegraphics[width=.1255\textwidth]{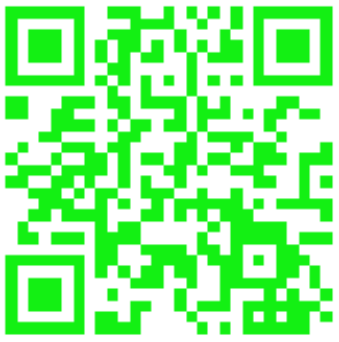}}
\subfigure[High capacity color barcode.]{
\label{hccb}
\includegraphics[width=.1295\textwidth]{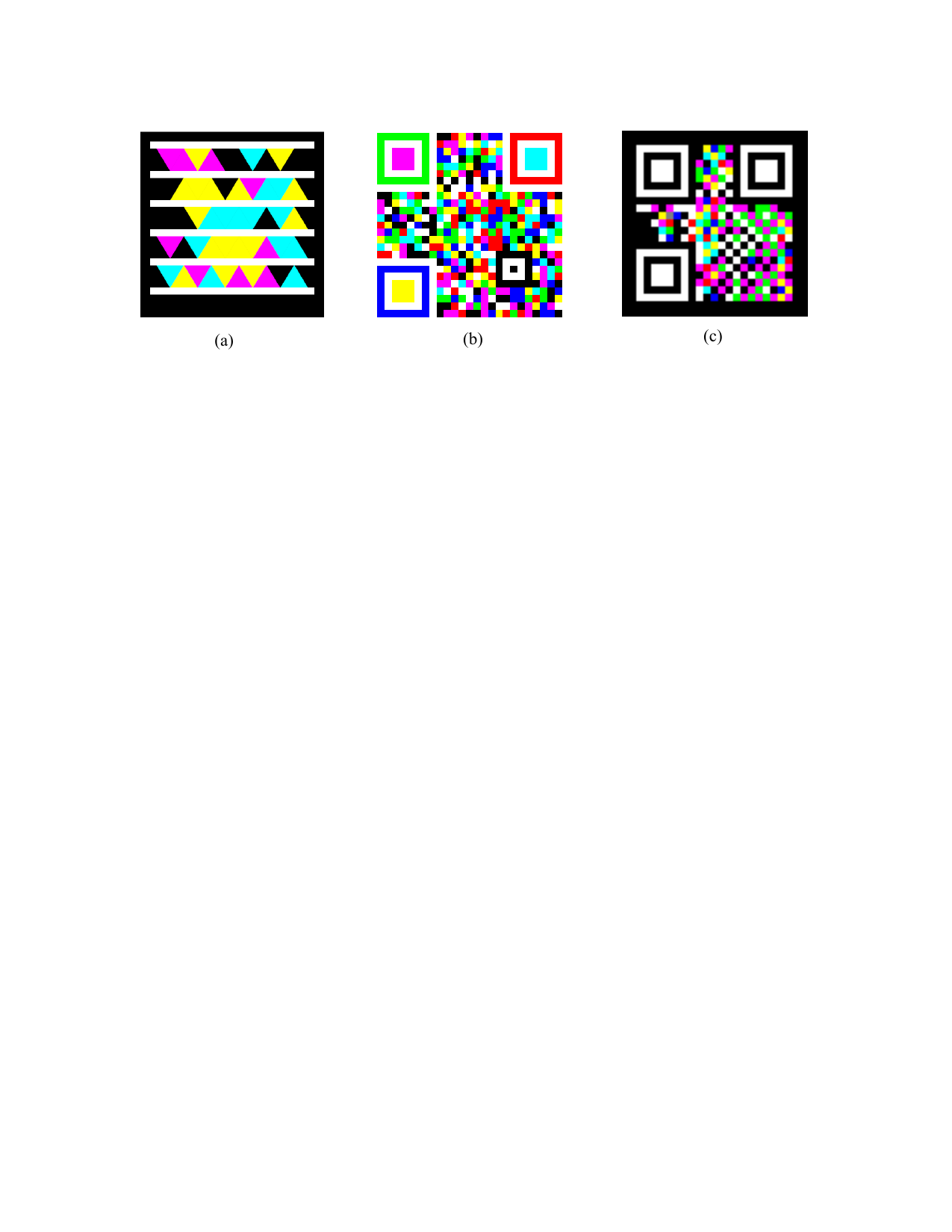}}
\caption{Examples of different types of 2D barcodes.}
\label{fig:barcode_examples}
\end{figure}

In contrast, our proposed HiQ framework addresses the aforementioned limitations in a comprehensive manner. On the encoding side, HiQ differs from HCC2D in that HiQ codes do not add extra reference symbols around the color QR codes; and the color QR codes generation of PCCC framework is a special case of HiQ, namely, 3-layer HiQ codes. On the decoding side, the differences mainly lie in geometric distortion correction and color recovery. HiQ adopts offline learning, {and thus does not rely on the specially designed reference color for training the color recovery model as HCC2D and PCCC do}. More importantly, by using RGT and QDA-CMI (or LSVM-CMI), HiQ addresses the problem of geometric and chromatic distortion particularly for high-density color QR codes which are not considered by HCC2D or PCCC.

\section{HiQ: A Framework for High-Capacity \\QR Codes}\label{sec:HiQ}

\begin{figure*}[t]
\centering
\includegraphics[width=1\textwidth]{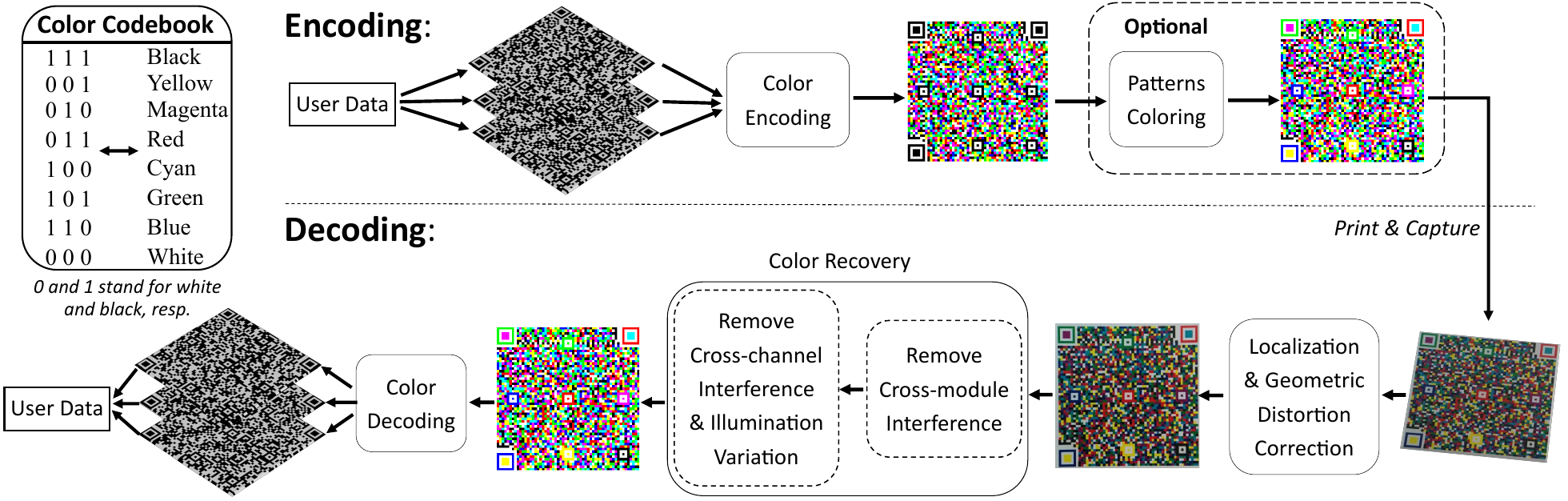}
\caption{{\color{black}An overview of the encoding and decoding of a 3-layer HiQ code in HiQ.} 
}
\label{fig:cqr_encode_decode}
\end{figure*}


Fig.~\ref{fig:cqr_encode_decode} gives an overview of the encoding and decoding process of the proposed HiQ framework. To exploit and reuse existing QR code systems, we keep intact the structure of traditional QR codes in our HiQ code design and select a highly discriminable set of colors to transform multiple traditional QR codes into one HiQ code. Specifically, HiQ firstly partitions the data to be encoded into multiple small pieces and encodes them into different monochrome QR codes independently. Note that different layers of monochrome QR codes can have different levels of error correction, but they must have the same number of modules in order to preserve the structure of conventional QR code. Secondly, HiQ uses different colors that are easily distinguishable to represent different combinations of the overlapping modules of the superposed monochrome QR codes. 
{\color{black} Lastly, the HiQ framework can, as an {\it option}, support  {\bf Pattern Coloring} by painting some special markers (e.g., the Finder and/or Alignment patterns) of the QR code 
with specific colors to either (1) carry extra formatting/ meta information of the HiQ code or (2)  provide reference colors which can be helpful for some existing decoding schemes
 (e.g., PCCC\cite{blasinski2013per})\footnote{Note, however, that the new decoding algorithms proposed in this paper, namely, QDA, LSVM as well as their CMI-extended variants, do not rely on 
 reference-color-painted special patterns/markers for decoding.}.
 Hereafter, we refer to the multiple monochrome QR codes within a HiQ code as its different {\it layers}. 
We call a HiQ code comprised of $n$ monochrome QR codes an  $n$-layer HiQ code.}
 
\begin{figure}[t]
\centering
\subfigure[1-layer QR code, 177-dim.]{
\includegraphics[width=.14\textwidth]{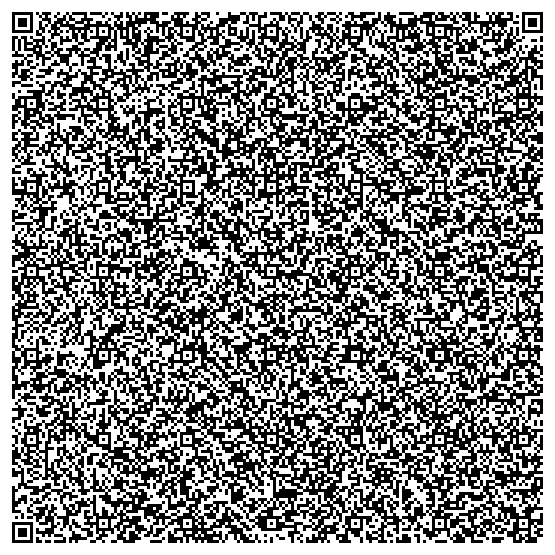}}
\hspace{0.02cm}
\subfigure[2-layer QR code, 125-dim.]{
\includegraphics[width=.14\textwidth]{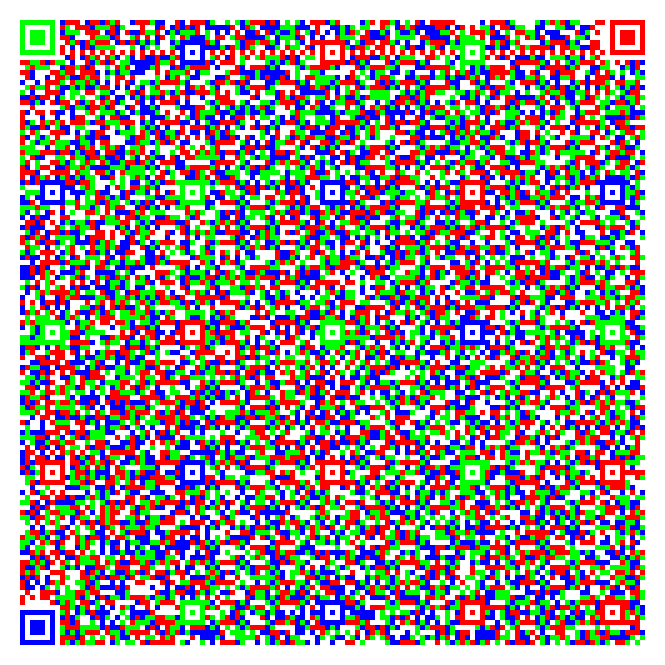}}
\hspace{0.02cm}
\subfigure[3-layer QR code, 105-dim.]{
\includegraphics[width=.14\textwidth]{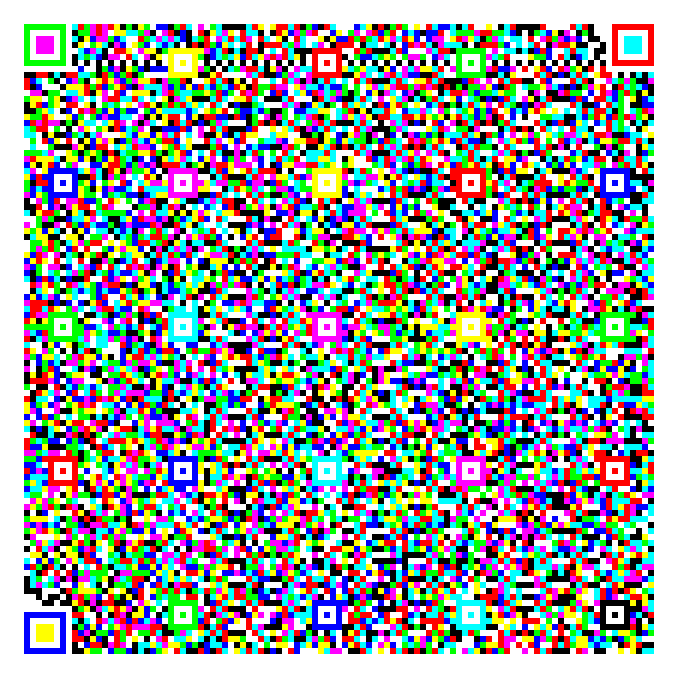}}
\label{fig:cqr3layer}
\caption{Examples of color QR codes of different layers with the same content size. All layers are protected with a low level of error correction.}
\label{fig:cqrlayers}
\end{figure}
Given $n$ monochrome QR codes, $\{M_i\}$, where $i=1,2,\cdots,n$, each $M_i$ is composed of the same number of modules. We denote the $j$th module of $M_i$ by $m_i^j$, where $m^j_i=0$ or 1. In order to achieve layer independence and separability in HiQ codes, HiQ constructs an $n$-layer HiQ code $C_n$ by concatenating all $M_i$ together so that the $j$th module of $C_n$, $c_n^j=\{m^j_1,m^j_2,\cdots,m^j_n\}$. Then, each $c_n^j$ is mapped into a particular color using a predefined color codebook $\mathbb{B}$, where $\lvert \mathbb{B}\rvert = 2^n$ as $m^j_i$ is binary.

An $n$-layer HiQ code has a data capacity that is $n$ times that of a monochrome QR code of the same number of modules. Alternatively, given the same amount of data to carry (within the capacity of a monochrome QR code), HiQ consumes much less substrate footprint in print media than traditional QR code does, assuming same printout density. HiQ codes degrade to monochrome QR codes when $n=1$. The color QR code proposed in \cite{blasinski2013per} is also a special case of HiQ with $n=3$. 
In a nutshell, HiQ is a framework that provides users and developers with more flexibilities in generating QR codes in terms of data capacity, embedded error correction level and appearance (color). Fig. \ref{fig:cqrlayers} gives examples of HiQ color QR codes of different layers ranging from 1 to 3. Given the same amount of user data and printout size, HiQ codes with fewer layers are denser than those with more layers. However, using more layers sharply increases the number of colors in HiQ codes. Consequently, the difficulty of decoding (mainly color recovery) increases.

\section{Modeling Chromatic Distortion}\label{sec:decoding}

\subsection{Framework Overview}

Contrary to other frameworks such as PCCC \cite{blasinski2013per} and HCC2D \cite{grillo2010high} which train the classifier online (in real-time) for each captured image,
HiQ learns the parameters of the classifier offline using color data collected exhaustively from real-world settings of QR codes scanning. This avoids training bias and unnecessary computations on mobile devices. As one of the most commonly used classification tool in many recognition tasks, SVM can be used as a color classifier. To train a multi-class SVM, one-vs-one and one-vs-all \cite{hsu2002comparison} are the widely-adopted schemes. However, they suffer from the drawback that the decoding process is quite time-consuming: one-vs-one and one-vs-all schemes need $2^n$ and $2^{2^n}$ binary classifiers, respectively. Taking advantages of the layered structure in data encoding of HiQ codes, we propose a \textit{layered strategy} where we train a binary SVM for each layer independently to predict the bit in the corresponding layer.

For $n$-layer color QR codes, the training data are denoted as $\{\mathcal{X}, \mathcal{Y}\}$, where $\mathcal{X}$ and $\mathcal{Y}$ are sets of normalized RGB values and binary $n$-tuples (e.g., $\{1,0,\cdots,0\}$), respectively. Traditional one-vs-all strategy just treats $\mathcal{Y}$ as color indicators and trains $2^n$ binary SVMs on $\{\mathcal{X}, \mathcal{Y}\}$ as there are $2^n$ colors. In contrast, we form $n$ binary bit sets, $\mathcal{Y}_1,\mathcal{Y}_2, \cdots, \mathcal{Y}_n$, by separating each element in $\mathcal{Y}$ into $n$ binary indicators, and train $n$ binary SVMs by using $\{\mathcal{X}, \mathcal{Y}_1\},\{\mathcal{X}, \mathcal{Y}_2\}, \cdots, \{\mathcal{X}, \mathcal{Y}_n\}$ as separate sets of training data. In this way, the prediction cost scales \textit{linearly} with the number of layers. We use LSVM as a shorthand for SVM trained using this layered strategy hereafter.

We highlight the following two advantages of LSVM over other traditional methods (e.g., QDA and one-vs-all SVM) in decoding HiQ codes:
\begin{itemize}
\item \textbf{Low processing latency:} One-vs-all SVM requiring $2^n$ binary SVM classifiers, while LSVM only needs $n$ binary SVM classifiers for decoding an $n$-layer HiQ code which is a huge improvement regarding processing latency. 
\item \textbf{Layer separability:} Using LSVM, the classifications of all layers are completely independent. Therefore, in a sequence of scanning of a multi-layer HiQ code, once a layer is decoded it need not be processed anymore, which saves much computational power. In contrast, the predictions of all layers are coupled together in methods like QDA. Thus, even after a layer has been successfully decoded, it will still be redundantly processed until all layers are decoded successfully.
\end{itemize}

In subsequent sections, we extend QDA and LSVM further to tackle cross-module interference as QDA and LSVM are shown to have superior performance in color predication compared with other methods (see \ref{sec:classifierCompare} for detailed results).

\subsection{Incorporating Cross-Module Interference Cancellation}
To address the cross-module interference in high-density HiQ codes, we append the feature of each module---normalized RGB intensities of the central pixel---with that of its four adjacent modules (top, bottom, left and right) to train the color classifier. {\color{black} Two reasons that motivate us to use four adjacent modules instead of eight are: a)  These four adjacent modules that share with the central module same edges where CMI occurs; b) Using eight modules will bring extra nontrivial computational overhead.} However, in this way the feature dimension rises from 3 to 15. In real-world mobile applications, computational power is limited and tens of thousands of predictions per second are required to decode a high-capacity HiQ code which usually consists of ten to thirty thousand modules and it usually takes multiple trials until success. Consequently, directly adding the feature from adjacent modules which increases feature dimension can hardly meet the processing latency requirement. For instance, to decode a $3$-layer HiQ code, our experiences show that if we use QDA as the color classifier, it takes nearly ten seconds for Google Nexus 5 to finish one frame of decoding, which is prohibitively expensive. Moreover, the computational cost grows dramatically as the number of layer increases.

{\color{black}Based on our empirical observations of highly-densed color QR codes, a target central module tends to be corrupted by multiple colors coming from its neighboring modules (and thus the use of the term "cross-module" interference).  Such observations motivate us to make the following key assumption about the cross-module interference based on a simple color mixing rule \cite{simonot2014between}:\textit{The {\bf pre-CMI color} of the central module is a linear combination of the {\bf perceived color} of the central module and that of its four neighboring 
ones.} 
By  {\it perceived color}, we mean the color perceived by the sensor of the decoding camera. 
By {\it pre-CMI color},  we mean the perceived color of a module when there is no (or negligible) cross-module interference.  
Here, each color is represented by a three-dimensional vector in  the RGB color space.
With this assumption, we firstly cancel the CMI to recover the pre-CMI color (3-dimensional) of each module which is a fast linear operation. Secondly, we use the 3-dimensional  pre-CMI color to 
estimate the ground-truth color of the target module instead of using a 15-dimensional input-feature vector to represent the perceived RGB vector of the target module and that of its 4 neighboring modules.
In this way, both accuracy and speed can be achieved. 

\IncMargin{1.1em}
\begin{algorithm}[t]
\caption{Algorithm for solving QDA-CMI}
\label{algo:QDACMI}
\Indm
\KwIn{The training data $\{({X}_i, y_i)\}$ where $y_i\in\{1,\cdots, K\}$}
\KwOut{$\{\boldsymbol{\Sigma}_k\}_{k=1}^K, \{\mu_k\}_{k=1}^K$ and ${\bm{\theta}}$}
\Indp
Initialize ${\bm{\theta}}^0=\trans{[1,0,0,0,0]}$, $j=0$\;
\While{not convergence}{
  \For{$k \in \{1,\cdots, K\}$}{
    $\mu_k^{j+1} = \sum_{i:y_i=k}\trans{{X}_i}{{\bm{\theta}}}^j / N_k$\; 
    $\boldsymbol{\Sigma}_k^{j+1}=\sum_{i:y_i=k}\trans{(\trans{{X}_i}{{\bm{\theta}}}^j-{\mu}_k)}(\trans{{X}_i}{{\bm{\theta}}}^j-{\mu}_k) / N_k$\;
  }
  Compute ${\bm{\theta}}^{j+1}$ using Eq. \eqref{eq:optM}\;
  $j=j+1$\;
}
$\boldsymbol{\Sigma}_k={\Sigma}_k^j, \mu_k=\mu_k^j$, for all $k\in \{1,\cdots, K\}$ and ${\bm{\theta}}={\bm{\theta}}^j$\;
\end{algorithm}

In the following, we represent the color feature of the $i$th sample ${X}_i$ as a $5\times 3$ matrix of which each row is formed by the normalized RGB intensities and define $\bm{\theta}$ as a $5\times 1$ column vector whose items are linear coefficients for the corresponding modules. Thus, the pre-CMI color of the $i$th sample is given by:
\vspace{-.5em}
\begin{equation}\label{eq:assumpeq}
\tilde{x}_i=\trans{{X}_i}\bm{\theta}.
\vspace{-.5em}
\end{equation}
By substituting the training data point, $x_i$, in the formulation of QDA and LSVM with Eq. \eqref{eq:assumpeq}, we introduce two models---QDA-CMI and LSVM-CMI.
} 

\textbf{QDA-CMI. }\label{sec:eqda}  
Using conventional QDA (without considering CMI), we assume the density function of the \textit{perceived color}, $x_i$, to be a multivariate Gaussian: 
\vspace{-.5em}
\begin{equation}
f_k(x_i) = \frac{1}{\sqrt{(2\pi)^l |\boldsymbol{\Sigma}_k|}} e^{-\frac{1}{2}\trans{(x_i-{\mu}_k)} \boldsymbol{\Sigma}_k^{-1} (x_i-{\mu}_k)},
\vspace{-.5em}
\end{equation}
where $k$ is the color (class) index, $l$ is the feature dimension, $\mu_k$ and $\boldsymbol{\Sigma}_k$ are the mean vector and covariance matrix of the $k$th class, respectively.

To incorporate CMI cancellation, we instead model the \textit{pre-CMI color}, $\tilde{x}_i$, rather than the preceived color, as a multivariate Gaussian. Together with Eq. \eqref{eq:assumpeq}, we obtain the following density function:
\vspace{-.5em}
\begin{equation}
f_k(\tilde{x}_i) = \frac{1}{\sqrt{(2\pi)^l |\boldsymbol{\Sigma}_k|}}e^{-\frac{1}{2}\trans{(\trans{X_i} \bm{\theta}-{\mu}_k)} \boldsymbol{\Sigma}_k^{-1}(\trans{X_i} \bm\theta-{\mu}_k)}.
\vspace{-.5em}
\end{equation}
We jointly learn the parameters $\boldsymbol{\Sigma}_k$, $\mu_k$ and $\bm{\theta}$ using maximum likelihood estimation (MLE) which maximizes the following objective function:
\vspace{-.5em}
\begin{equation}\label{eq:mle}
G(\boldsymbol{\Sigma}, \mu, \bm{\theta}) = \prod_{k=1}^K \prod_{i:y_i=k} f_k(\tilde{x}_i),
\vspace{-.5em}
\end{equation}
where $\boldsymbol{\Sigma}=\{\boldsymbol{\Sigma}_k\}_{k=1}^K$ and $\mu=\{\mu_k\}_{k=1}^K$. {\color{black}In this optimization problem, the coefficients in $\bm{\theta}$ do not necessarily need to sum up to one and we can search the whole space of $\mathbbm{R}^5$ to obtain the optimal solution for $\bm{\theta}$.} As such, we solve the optimization problem by alternately optimizing over $(\boldsymbol{\Sigma}, \mu)$ and $\bm{\theta}$. Refer to Algorithm \ref{algo:QDACMI} for details. In the first step, we initialize $\bm{\theta}$ such that the element corresponding to the central module to be 1 and others to be 0. Note that with $\bm{\theta}$ fixed, the problem degenerates to traditional QDA and the solution is the MLE of $K$ multivariate Gaussian distributions: $\mu_k = \sum_{i:y_i=k}\trans{X_i}{\bm{\theta}} / N_k$ and $\boldsymbol{\Sigma}_k=\sum_{i:y_i=k}\trans{(\trans{X_i}{\bm{\theta}}-{\mu}_k)}(\trans{X_i}\bm{\theta}-{\mu}_k) / N_k$, where $N_k$ is the number of class-$k$ observations.

In the second step, we fix $(\boldsymbol{\Sigma}, \mu)$ and optimize $\bm{\theta}$, which is equivalent to maximizing the following log-likelihood function:
\vspace{-.5em}
\begin{align*}
\mathcal{L}&(\boldsymbol{\Sigma}, \mu, \bm{\theta}) = \log G(\boldsymbol{\Sigma}, \mu, \bm{\theta}) \\
&= -\frac{1}{2}\sum_{k=1}^K \sum_{i:y_i=k} {\trans{(\trans{X_i}{\bm{\theta}}-{\mu}_k)} \boldsymbol{\Sigma}_k^{-1}(\trans{X_i}\bm{\theta}-{\mu}_k)} + C,
\vspace{-.5em}
\end{align*}
where $C$ is some constant. By taking the derivative of $\mathcal{L}(\boldsymbol{\Sigma}, \mu, \bm{\theta})$ w.r.t. $\bm{\theta}$ and setting it to zero, we have
\vspace{-.5em}
\begin{equation}\label{eq:optM}
\bm{\theta} = \bigg(\sum_{k=1}^K \sum_{i:y_i=k} \trans{X_i} \boldsymbol{\Sigma}_k^{-1}X_i\bigg)^{-1} \bigg(\sum_{k=1}^K \sum_{i:y_i=k} \trans{X_i} \boldsymbol{\Sigma}_k^{-1}\mu_k\bigg).
\vspace{-.5em}
\end{equation}
Then the algorithm alternates between the first step and the second step until convergence.

\textbf{LSVM-CMI. }\label{sec:layer_scheme}
{\color{black}Similarly, we can also estimate the pre-CMI color using SVM model by substituting Eq. \eqref{eq:assumpeq} into the original formulation of LSVM. In this case, we train the $j$th binary SVM on the training data, $\Xi_j=\{\mathcal{X}, \mathcal{Y}_j\}$ where $1\leq j\leq n$ and then obtain the linear coefficient vector $\bm{\theta}_j = \{\theta_j^1,\cdots,\theta_j^5\}$ to recover the pre-CMI color for each module. As such, this LSVM-CMI model yields the following optimization problem (P1):
\vspace{-.5em}
\begin{align}
  \min_{\bm{\omega}_j, b_j, \bm{\xi}, \bm{\theta}_j}  &  \  \frac{1}{2} \| \bm{\omega}_j \|^2 + C {\sum_{i=1}^{N} \xi_i}  \tag{P1} &\\
   \label{label_positive}
 \mbox{s.t.}  & \  \trans{\bm{\omega}_j} \trans{X_i} \bm{\theta}_j + b_j \geq 1 - \xi_i \quad \forall (x_i,y_i=1)\in\Xi_j, & \\
 \label{label_negative}
  &  \ \trans{\bm{\omega}_j} \trans{X_i} \bm{\theta}_j + b_j \leq -1 + \xi_i \quad \forall (x_i,y_i=0)\in\Xi_j, & \\ 
  \label{constraint_xi}
  &  \xi_i \geq 0, \quad  \forall 1 \leq i \leq N, & \\
  \label{constraint_M_2}
  &   ||\bm{\theta}_j||_k \leq 1. &  
  \vspace{-1em}
\end{align} 
where $||\cdot||_k$ represents the $l_k$-norm of a vector, $\bm{\xi} = (\xi_i | i  = 1,2,\cdots,N)$ and $N$ is the number of training data points. 

Besides the standard constraints of a SVM model,  we have also included  Eq.~\eqref{constraint_M_2}, i.e., $||\bm{\theta}_j||_k \leq 1$, as a constraint of (P1) 
due to the following reason:
Empirical results of the  QDA-CMI model above show that the linear coefficient vector $\bm{\theta_j}$ should not be far away from $[1,0,0,0,0]$. 
{\color{black}For example, the optimal value for $\bm{\theta}$ learned from our QR-code dataset under the QDA-CMI model is given by: $$\bm{\theta}_Q^* = [0.9993, -0.0266, -0.0150, -0.0192, -0.0060].$$ 
 Note from $\bm{\theta}_Q^*$ that, for this dataset, the central module itself indeed makes the dominant contribution to the intensity of the pre-CMI color than the other four neighboring modules.  
This  also indicates that the linear coefficient vector $\bm{\theta}$ to account for CMI should not deviate from $[1,0,0,0,0]$ drastically.} 
Such observations motivate us to add  $||\bm{\theta}_j||_k \leq 1$, as a constraint of (P1) in order to perform a shrinkage operation on $\bm{\theta}_j$.
Like other existing image denoising or classification problems, the choice of $k$ in the shrinkage operation is generally problem dependent. For example, for the classification problem of unordered features in \cite{l1normshrinkage} (Section 10), $l_1$-norm is used while for the other problem of multipath interference cancellation for flight cameras in \cite{l1regularlizationsparsecamera} (Section 4), $l_2$-norm has found to be more effective.
}
{\color{black}
Typically, the use of  $l_1$-norm in the shrinkage operation tends to induce sparsity in the solution space. In the context of our LSVM-CMI problem, we have shown (in the supplementary material of this paper) that only the intensity of one of the 5 modules of interest (i.e. the central target modules and its 4 neighbors) would contribute to the pre-CMI color of the central module. By contrast, the use of $l_2$-norm shrinkage would reduce the variance across all coefficients and thus lead to a non-sparse result. Refer to the supplementary material of this paper for a detailed analysis and comparison of the effectiveness of using $l_1$-norm vs $l_2$-norm shrinkage for the LSVM-CMI problem. The conclusion therein drives us to choose $k=2$ for Eq.~\eqref{constraint_M_2} and Constraint \eqref{constraint_M_2} becomes:
\begin{equation}
\label{second_order_constraint}
||\bm{\theta}_j||_2 \leq 1.
\end{equation}

}


{\color{black}In the rest of this section, we shall solve the optimization problem P1 subject to the constraint in Eq. \eqref{second_order_constraint}.}
{\color{black}
Observe that, on the one hand, for a fixed $\bm{\theta}_j$, P1 reduces to a standard SVM optimization problem. On the other hand, when $\bm{\omega}_j$ and $b_j$ are given, P1 is equivalent to the following optimization problem (P3):
\vspace{-.2em}
\begin{align}
\min_{\bm{\theta}_j}  &  \sum_{i=1}^{N} \max \Big\{0,1 + (1 - 2y_i)(\trans{\bm{\omega}_j} \trans{X_i} \bm{\theta}_j + b_j) \Big\}  \tag{P3} &\\
   \label{M2_another}
\mbox{s.t.}  &  \ ||\bm{\theta}_j||_2 \leq 1. 
\vspace{-1.5em}
\end{align} 
\vspace{-1em}
}

{\color{black}
P3 is a convex optimization problem and we adopt the gradient projection approach \cite{nonlinear_programming} to seek the optimal solutions. The corresponding pseudo-code of our designed algorithm is exhibited as Algorithm \ref{algo:LSVMCMI}. 
}

\begin{algorithm}[h!]
\begin{enumeratenumeric}
  \item Initialize $\bm{\theta}_j = \{1,0,0,0,0\}$;
  
  \item Repeat the following steps until convergence;
  
  \item Fix ${\bm{\theta}_j}$, apply the dual approach to solve P1, which outputs the local optimal solution $\bm{\omega}_j$ and $b_j$;
  
  \item Fix $\bm{\omega}_j$, apply the gradient projection approach to solve P3, which outputs the local optimal solution $\bm{\theta}_j$;
\end{enumeratenumeric}
\caption{Algorithm for solving LSVM-CMI}
\label{algo:LSVMCMI}
\end{algorithm}

{\color{black}
We characterize the convergency of Algorithm \ref{algo:LSVMCMI} in the following theorem:

\begin{theorem}
Algorithm \ref{algo:LSVMCMI} converges to a local optimum of P1. 
\end{theorem}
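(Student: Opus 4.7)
The plan is to split the argument into a monotonicity/convergence step and a global-optimality certification step, with the latter being the crux.

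First I would establish that Algorithm \ref{algo:LSVMCMI} drives the objective of P1 down monotonically. In Step 3, with $\theta_j$ frozen, P1 collapses to a standard soft-margin SVM on the transformed features $\trans{X_i}\theta_j\in\mathbb{R}^3$, which is strongly convex in $(w_j,b_j,\bm{\xi})$ and is solved exactly by the dual. In Step 4, with $(w_j,b_j)$ frozen, the residual problem P2 has a convex piecewise-linear hinge objective over the closed convex ball $\{\theta_j:\trans{\theta_j}\theta_j\le 1\}$, so gradient projection converges to the global minimiser of this subproblem. Consequently, every outer iteration is non-increasing; since the objective is bounded below by zero and the feasible set is compact, the sequence of objective values converges and a subsequence of iterates converges to a limit point $(w_j^\star,b_j^\star,\bm{\xi}^\star,\theta_j^\star)$ that is a partial optimum in both blocks.

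The main obstacle, and what I would flag as the heart of the proof, is upgrading partial optimum to global optimum. Because of the bilinear coupling $\trans{w_j}\trans{X_i}\theta_j$, P1 is only biconvex in $(w_j,\theta_j)$, and generic block coordinate descent on a biconvex program certifies only stationarity. To push through, I would reuse the Lagrangian technique from the proof of Lemma \ref{lemma_linear_constraint_of_M}: form the Lagrangian \eqref{largrangian_function}, dualise the hinge inequalities, substitute the stationarity condition for $w_j$ (namely $w_j=\sum_i \lambda_i(2y_i-1)\trans{X_i}\theta_j$), and show that the reduced Lagrangian is concave in $\bm{\lambda}$ and convex in $\theta_j$ over the sphere. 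Slater's condition holds because the sphere has non-empty interior, so strong duality would apply and the KKT system fulfilled at the limit point would coincide with that of the full problem, yielding the global optimum.

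The delicate step I would scrutinise most is exactly the post-elimination convexity claim: after substituting out $w_j$, the bilinear term becomes a quadratic form in $\theta_j$ whose curvature depends on $\bm{\lambda}$ through a Gram-type matrix of the $\trans{X_i}\theta_j$'s, and it needs to be positive semidefinite for minimax equality to deliver global optimality. If this check fails, a natural fallback is a rank-one lifting $Z=w_j\trans{\theta_j}$, rewriting $\trans{w_j}\trans{X_i}\theta_j=\operatorname{tr}(\trans{Z}\,\trans{X_i})$, dropping the rank constraint, and verifying tightness of the resulting SDP relaxation under the sphere constraint on $\theta_j$. In either route, the convex reformulation is what separates true global optimality from a mere stationary-point guarantee, and I would expect essentially all of the work in the proof to live there.
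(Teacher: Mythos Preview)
Your monotonicity step is exactly what the paper does: Steps 3 and 4 each solve their subproblem to optimality and hence can only decrease the objective, so the sequence converges to a local optimum of P1.

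Where you diverge sharply is the global-optimality step. The paper's entire argument here is a single sentence: ``Since P1 is a convex optimization problem, its optimal solution is therefore also globally optimal.'' There is no Lagrangian reduction, no curvature check, no lifting---the authors simply assert joint convexity of P1 and invoke local-equals-global.

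Your proposal, by contrast, treats this assertion as the crux and devotes almost all its effort to \emph{justifying} it. You correctly observe that the bilinear coupling $\trans{\omega_j}\trans{X_i}\theta_j$ in constraints \eqref{label_positive}--\eqref{label_negative} renders P1 biconvex rather than manifestly jointly convex, so that block coordinate descent only certifies partial optimality in general. Your reduced-Lagrangian route (eliminate $\omega_j$ via stationarity and check convexity in $\theta_j$) and your rank-one lifting fallback are both sensible strategies toward a bona fide convex reformulation, though as you yourself flag, each still has a gap to close (the sign of the induced quadratic form in $\theta_j$, or tightness of the SDP relaxation).

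In short: the paper's proof and yours agree on the descent argument, but the paper disposes of global optimality by fiat, whereas you attempt to earn it. The extra machinery you propose is precisely what would be needed to substantiate the convexity claim the paper takes for granted; your instinct that this is where the real work lives is sound.
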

\begin{proof}
In Algorithm \ref{algo:LSVMCMI}, the optimization process in Step 3 and Step 4 shall only decrease the objective value, Algorithm \ref{algo:LSVMCMI} converges to a local optimum of P1. 
\end{proof}

Theorem 1 states that, within finite steps, Algorithm \ref{algo:LSVMCMI} shall output a local optimal solution to P1, i.e., $w_j^*$, $b_j^*$ and $\bm{\theta}_j^*$. So given a testing sample $x$, we use all $n$ SVMs to output the predicted $n$-tuple, $\hat{\mathbf{y}}=\{\hat{y_1},\cdots,\hat{y_n}\}$, where 
}

\begin{equation}
\hat{y_j}=\sign(\trans{(\bm{\omega}_j^*)} X \bm{\theta}_j^*+b^*_j).
\end{equation}

Experimental results on 5390 3-layer HiQ code samples show that CMI cancellation reduces the decoding failure rate by nearly 14\% (from 65\% to 56\%) and reduces the bit error rate from 4.3\% to 3.2\% averaging over all layers. See Section \ref{sec:cqrc_eva} for more detailed evaluation which also reveals that CMI cancellation helps to increase the density of HiQ codes. For example, with LSVM-CMI the minimum decodable printout size of a 7700-byte HiQ code is reduced from $50\times50\,\rm{mm}^2$ to $38\times38\,\rm{mm}^2$ (the density is increased by more than 46\%).

\section{Robust Geometric Transformation}\label{sec:geotrans}


Standard methods correct geometric distortion by detecting four spatial patterns in the corners of a QR code. However, in practice, the detection is inevitably inaccurate, and the cross-module interference makes the decoding more sensitive to transformation errors caused by inexact detection, especially for high-density QR codes. We find that using more points to calculate geometric transformation reduces the reconstruction error significantly, see Fig. \ref{fig:varypixelnum} for experimental results. Therefore, instead of developing a more complicated detection algorithm which increases processing latency, we address this problem by using a robust geometric transformation (RGT) algorithm which accurately samples for each module a pixel within the central region where the color interference is less severe than that along the edges of a module. Unlike standard methods, RGT leverages all spatial patterns, including the internal ones, and solves a \textit{weighted} over-determined linear system to estimate the transformation matrix.

\begin{figure}[t]
\centering
\subfigure[The effect of adding more labeled points.]{
\label{fig:varypixelnum}
\includegraphics[width=.232\textwidth]{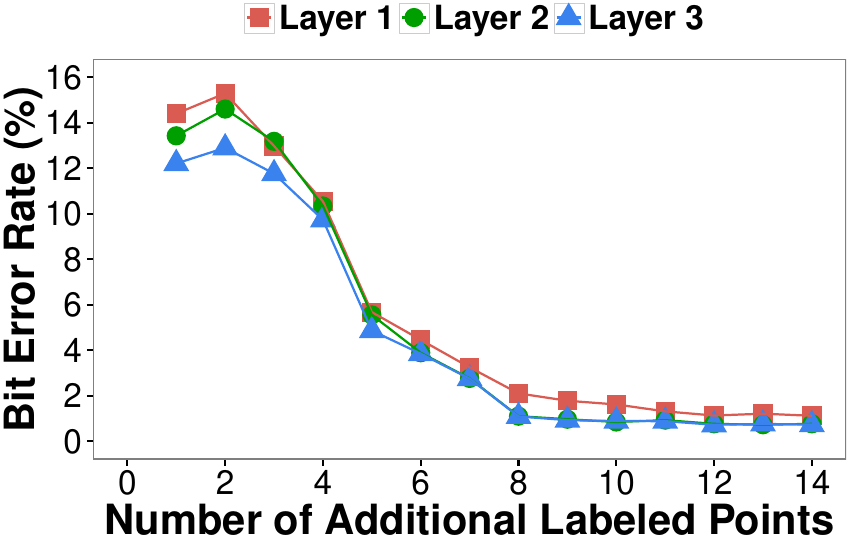}}
\subfigure[The effect of shifting one labeled point from the correct location.]{
\label{fig:shiftpixel}
\includegraphics[width=.232\textwidth]{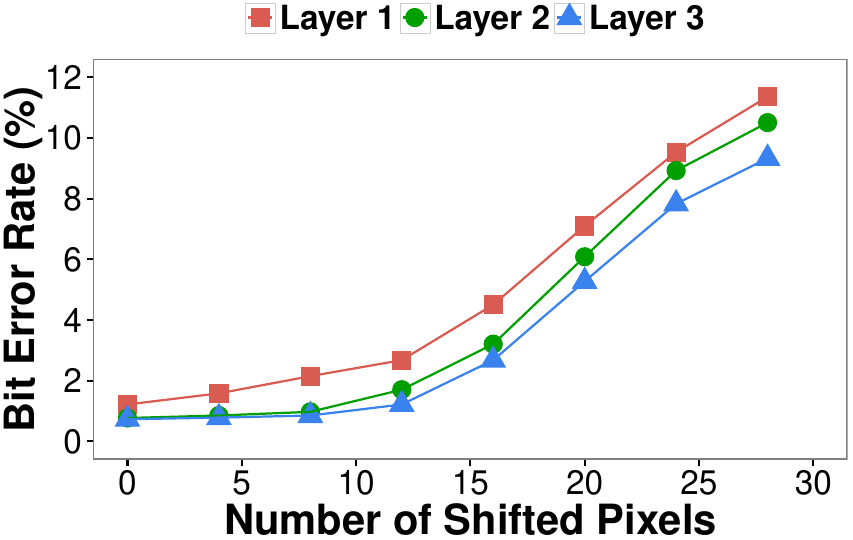}}
\caption{The estimation of the geometric transformation matrix. 
The module size of the HiQ code under testing is approximately 17 pixels.} 
\label{fig:rgtexp}
\vspace{-1em}
\end{figure}

Given $N$ tuples each of which consists of a pair of 2D data-points, namely, ${\{<\mathbf{x_i},\mathbf{x_i}'>,i=1,2,\cdots,N\}}$, where $\mathbf{x_i}$ is the position of a detected pattern, and $\mathbf{x_i}'$ is the corresponding point in the data matrix to be reconstructed. In perspective projection \cite{hartley2003multiple}, $\mathbf{x_i}$ is the homogeneous coordinate representation $(x_i, y_i, z_i)$ where we empirically choose $z_i=1$, and each pair of corresponding points gives two linear equations:
\begin{equation} 
\label{eq_homography}
A_i\mathrm{H}=\mathbf{0},
\end{equation}
where $\mathrm{H}$ is the transformation matrix to be estimated and
$$A_i=\begin{bmatrix} \trans{\mathbf{0}} & -\trans{\mathbf{x_i}} & y_i'\trans{\mathbf{x_i}} \\ \trans{\mathbf{x_i}} & \trans{\mathbf{0}} & -x_i'\trans{\mathbf{x_i}} \end{bmatrix}.$$
Note that although $\mathrm{H}$ has 9 entries, $h_1, h_2, \cdots, h_9$, since in 2D homographies, $\mathrm{H}$ is defined up to scale, and thus has eight degrees of freedom and one may choose $h_9=1$. Therefore, four point coordinates give eight independent linear equations as Eq.\eqref{eq_homography} which are enough for estimating $\mathrm{H}$. However, since the estimated positions of the special patterns often contain noise, which implies $A_i\mathrm{H}\neq \mathbf{0}$, RGT regards the norm $\|A_i\mathrm{H}\|_2$ as the transformation error and minimizes a \textit{weighted} error sum to obtain the estimation of $\mathrm{H}$:
\begin{equation}
\label{eq_opt1}
\begin{aligned}
& \underset{\mathrm{H}}{\text{minimize}}
& & \sum_{i=1}^Nw_i\|A_i\mathrm{H}\|_2 \\
& \text{subject to}
& & \|\mathrm{H}\|_2=1,
\end{aligned}
\end{equation}
where $w_i$ is the weighting factor of each input point $\mathbf{x_i}$. Instead of arbitrarily fixing one $h_i$, we add constraint $\|\mathrm{H}\|_2=1$ to avoid $\mathrm{H}=\mathbf{0}$. As we find that the estimated positions of finder patterns are often more accurate than that of alignment patterns, we assign higher weights to detected positions of finder patterns and lower weights to alignment patterns. Empirically, we set $w_i=0.6$ if $\mathbf{x_i}$ is from the finder pattern, $w_i=0.4$ otherwise. Note that solving \eqref{eq_opt1} is equivalent to solve the following unconstrained optimization problem:
\begin{equation}
\label{eq_opt2}
\begin{aligned}
& \underset{\mathrm{H}}{\text{minimize}}
& & \frac{\|\mathrm{A}\mathrm{H}\|_2}{\|\mathrm{H}\|_2}
\end{aligned}
\end{equation}
where $\mathrm{A}$ is a matrix built from $\{w_iA_i\rvert i=1,\cdots,N\}$, and each $w_iA_i$ contributes two matrix rows to $\mathrm{A}$. Fortunately, the solution to Eq.\eqref{eq_opt2} is just the corresponding singular vector of the smallest singular value \cite{hartley2003multiple}. Singular-value decomposition (SVD) can be used to solve this problem efficiently.

As is shown in Fig. \ref{fig:shiftpixel}, RGT is robust to \textit{minor} shift in the detected positions, but \textit{not} false positives. To reduce false positives, we take advantage of the color property by coloring each pattern with a specific color in the encoding phase (see Fig. \ref{fig:cqr_encode_decode}). For each detected pattern, we filter out possible false detections by checking whether the color of it is correct or not. {\color{black}We demonstrate the effectiveness of RGT by comparing the baseline PCCC with and without RGT in Fig. \ref{fig:rst_DSR_BER} (see Sec. \ref{sec:cqrc_eva} for details).}

{\color{black}
\section{Additional Refinements and Performance Optimization}\label{sec:impl}

\subsection{Color Normalization}
The problem of illumination variations gives rise to the so-called color constancy \cite{cheng2015effective}\cite{shi2016deep} problem which has been an active area of computer vision research. However, most existing algorithms for color constancy tend to be computation-intensive, and thus are not viable for our application of HiQ code decoding using off-the-shelf smartphones. To balance between complexity and efficacy, we adopt the method from \cite{gijsenij2012improving} and normalize the RGB intensities of each sampled pixel with the white color estimated from the QR code image by leveraging its structure. This effectively makes the color feature less illumination-sensitive.

Given a captured image of an $n$-layer HiQ code, we first estimate the RGB intensities of the white color $\mathbf{W}$, of the captured image from white regions in the HiQ codes (e.g., white areas along the boundaries and within the spatial patterns). We denote a pixel sampled during geometric transformation\footnote{\color{black}Our method samples one representative pixel from the central region of each module.} by $(\boldsymbol{x},y)$, where $\boldsymbol{x}$ is a 3-dim color feature and $y=1,2,\cdots,2^n$ being the color label. Instead of directly using RGB intensities, $\mathbf{I}=\{\mathbf{I}_R, \mathbf{I}_G, \mathbf{I}_B\}$, as the color feature for color recovery, we normalize $\mathbf{I}$ by $\mathbf{W}$: $\boldsymbol{x}_j=\mathbf{I}_j / \mathbf{W}_j, j\in\{R,G,B\}$.

Yet due to the fact that the estimation of white color (i.e., $\mathbf{W}$) may contain noise, we adopt the data augmentation technique commonly used in training neural networks \cite{luke2017augmentation} and augment the training data by deliberately injecting noise to $\mathbf{W}$ to enhance the robustness of the color classifier. More precisely, besides the original data point $(\boldsymbol{x},y)$, each sampled pixel $\mathbf{I}$ is further normalized by five \enquote{noisy} estimations of white color which are randomly and independently drawn from a normal distribution with mean $\mathbf{W}$ and a small standard deviation. It is worth noting that the color normalization does not suffer from the problems caused by the use of reference color like other methods (discussed in Sec. \ref{sec:intro}) because sampling very few pixels from the white regions will suffice and it is resilient to estimation noise.}

\subsection{Local Binarization}\label{sec:binarize}

Existing monochrome QR code decoders usually use image luminance, e.g., the Y channel of the YUV color space, to binarize QR codes. However, directly applying it on color ones can be problematic {\color{black}because some colors have much higher luminance than other colors (e.g., yellow is often binarized as white)}, which makes some patterns undetectable. To solve this problem, we use a simple but effective method to binarize HiQ codes. Let $\mathbf{I}$ denotes an image of a HiQ code formatted in the \textit{RGB} color space. We first equally divide it into $8\times 8$ blocks. In each block, a threshold is computed for each channel as follows:
$$T_i = \frac{\max(\mathbf{I}_i)+\min(\mathbf{I}_i)}{2}$$
where $i\in\{R, G, B\}$ and $\mathbf{I}_i$ is the $i$th channel of image $\mathbf{I}$. A pixel denoted by a triplet $(P_R,P_G,P_B)$ is assigned 1 (black) if $P_i<T_i$ for any $i\in\{R, G, B\}$, 0 (white) otherwise.

\begin{figure}[t]
\centering
\includegraphics[width=.35\textwidth]{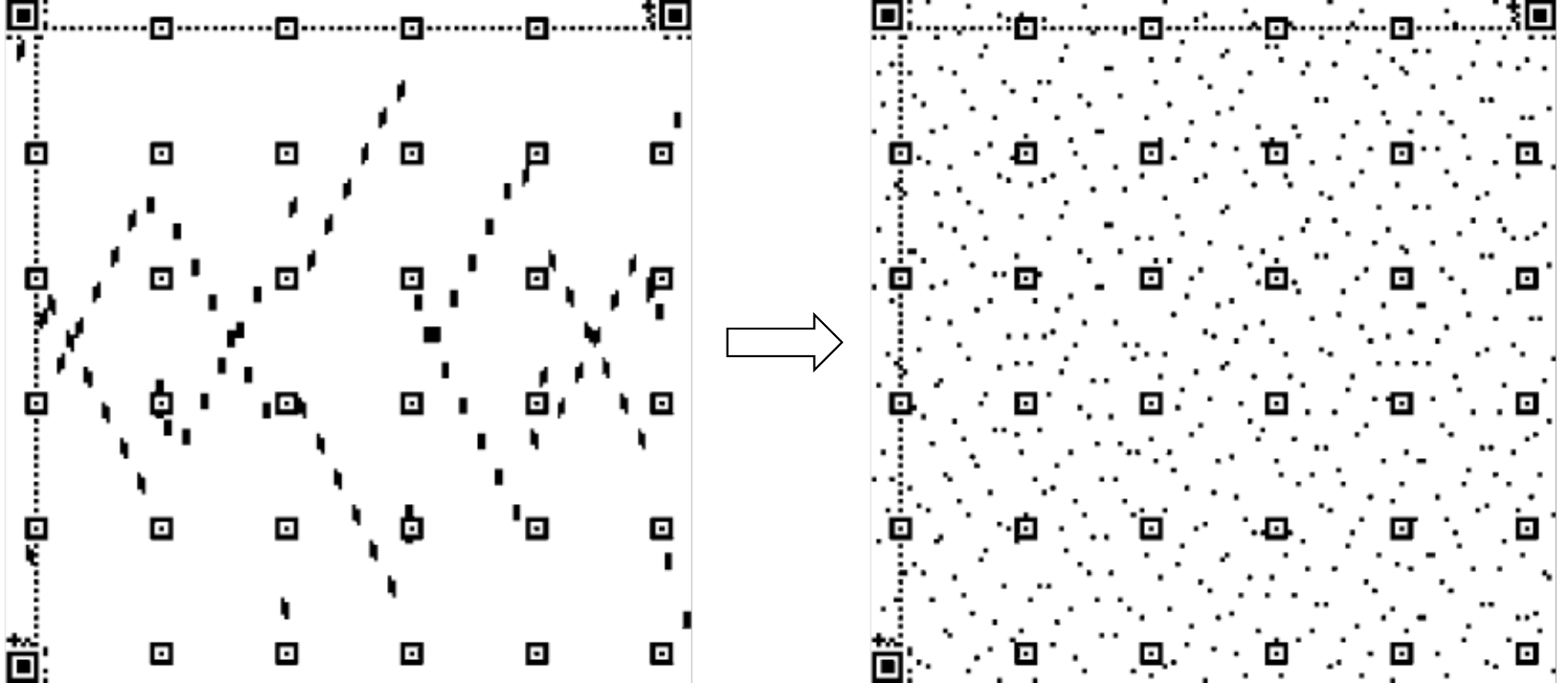}
\caption{Spatial Randomization of Data Bits. On the left is the bit distribution of the original QR code, and on the right is the bit distribution after randomization.}
\label{randomization}
\vspace{-1em}
\end{figure}

\begin{figure*}[t]
\centering
\includegraphics[width=0.9\textwidth]{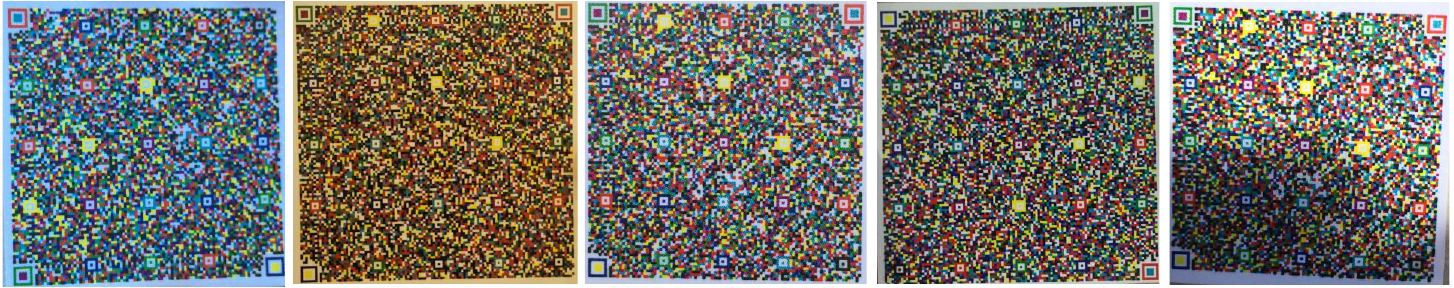}
\caption{Samples from CUHK-CQRC captured under different lighting conditions.}
\label{CQRC_smp}
\end{figure*}

\subsection{Spatial Randomization and Block Accumulation}\label{sec:randaccu}
Through our experiences of scanning QR codes using mobile devices, we noticed one strange fact that some localized region somehow causes the entire decoding process to fail even though the bit error rate averaging over the entire QR code should be recoverable by the built-in Reed-Solomon error correcting coding of the QR code. After examining the error correction mechanism, we surprisingly found that QR code decoder performs error correction block by block, and in each data block, data is shuffled byte by byte in QR codes. However, for \textit{high-density QR codes}, data bits from the same block do not spread out uniformly. Instead, they tend to assemble in the local areas (see Fig. \ref{randomization} for illustration). Consequently, the concentration of the data bits in some specific blocks easily leads to error-correction failure because external factors like local overexposure often lead to a large error percentage in one block beyond repair. With even a single block failure, the decoder will initiate a new round of scanning while discarding all information successfully decoded from other blocks. Moreover, in most failure decoding cases, errors in each captured image always assemble in few blocks instead of affecting all the blocks.

To improve the performance of HiQ framework and reduce scanning latency, we make the following adaptations:
\begin{itemize}
\item \textbf{Spatial randomization}: To avoid data block decoding failure caused by local errors, we propose to shuffle the data of each block bit-by-bit into the whole matrix \textit{uniformly}, which we call as \textit{spatial bits randomization}, to improve the probability of successful correction of each block as shown in Fig. \ref{randomization}.

\item \textbf{Data block accumulation}: Moreover, in order to prevent the failure in one single block which makes the efforts in other blocks in vain, we propose \textit{Data Block Accumulation}, which accumulates the successfully decoded data blocks in previous scans until all the data blocks are decoded.
\end{itemize}
By using these refinements, we manage to cut down the scanning latency significantly, see Section \ref{sec:exp_mobile} for experimental results.

\section{Evaluation}\label{sec:exp}
In this section, we present the implementation details of HiQ and the results of comprehensive experiments. We design two phases of experiments. In the first phase, we compare HiQ with the baseline method on a challenging HiQ code dataset, CUHK-CQRC. 
In the second phase, we evaluate HiQ in real-world operation using off-the-shelf mobile devices. In particular, we collect a large-scale HiQ code dataset, CUHK-CQRC, to evaluate the performance of HiQ by comparing with PCCC \cite{blasinski2013per}.
{\color{black}
  For a fair comparison, we generate HiQ codes with {\bf Pattern Coloring} because PCCC needs reference color for decoding, but our decoding schemes do not need Pattern Coloring.
  }
Note that PCCC provides two different methods: Pilot block (PB) for color QR codes with embedded reference color and EM for those without using the reference color.
In our implementation of PCCC we use PB because PB is reported to outperform EM \cite{blasinski2013per}.

\subsection{Performance Metrics}\label{sec:metrics}
We use the following three metrics to quantify the performance of each approach: 1) Bit Error Rate (BER), 2) Decoding Failure Rate (DFR), 3) scanning time. DFR and BER are used in the evaluation on CUHK-CQRC which is conducted via Matlab simulation in a frame-by-frame manner. Scanning time is used for characterizing the overall user-perceived performance under practical settings.

BER denotes the percentage of wrongly decoded bits \textit{before} applying the built-in Reed-Solomon error correction mechanism; DFR is the percentage of the QR codes that cannot be decoded \textit{after} the error correction mechanism is applied over those that can be successfully localized. DFR measures the overall performance of the decoding method. Compared with DFR, BER is a more fine-grained metric which directly measures the error of color recovery and geometric transformation.
Scanning time is the interval between the time when the camera takes the first frame and the time when the decoding is successfully completed. It measures the overall performance of the decoding approaches on mobile devices which quantifies the user experience.

\subsection{CUHK-CQRC: A Large-Scale Color QR Code Dataset}\label{CQRC}
\begin{table}[b]
\caption{Types of smartphones used in collecting database}
\begin{center}
\begin{tabular}{p{0.3cm}p{2.1cm}p{1.3cm}p{2cm}p{0.5cm}} 
\hline
\textbf{ID} & \textbf{Modle Name} & \textbf{Megapixels (MP)} & \textbf{Image Stabilization} & \textbf{Auto-focus} \\ 
\hline
\hline
1 & iPhone 6 plus & 8.0 & \checkmark (optical) &  \checkmark \\ 
2 & iPhone 6 & 8.0 &  \checkmark (digital) &  \checkmark \\ 
3 & Nexus 4 & 8.0 &   &  \checkmark \\ 
4 & Meizu MX2 & 8.0 &  &  \checkmark \\ 
5 & Oneplus 1 & 13.0 &  &  \checkmark \\ 
6 & Galaxy Nexus 3 & 5.0 &  &  \checkmark \\ 
7 & Sony Xperia M2 & 8.0 &  &  \checkmark \\ 
8 & Nexus 5 & 8.0 & \checkmark (optical) & \checkmark \\
\hline
\end{tabular}
\end{center}
\label{table:phone}
\end{table}
We establish a challenging HiQ code dataset, CUHK-CQRC, in this paper. CUHK-CQRC consists of 1,506 photos and 3,884 camera previews (video frames) of high-density 3-layer color QR codes captured by different phone models under different lighting conditions. Fig. \ref{CQRC_smp} presents some samples of CUHK-CQRC. Different from \cite{blasinski2013per}, we also include previews in our dataset because of the following two reasons. Firstly, photos are different from previews. {\color{black}When users take a photo using the on-board camera of a mobile phone, many embedded systems implicitly process (e.g., deblurring, sharpening, etc) the output image in order to make it more attractive in appearance, while preview may not go through this process. When compared with the captured images, previews are often of a lower resolution.} Secondly, compared with capturing \textit{photos}, it is much faster and more cost-effective for a mobile phone camera to generate previews. Hence, most mobile applications use camera previews as the input of the decoder. 

{\color{black}
\begin{table*}[t!]
{\color{black}
\caption{\color{black}Comparison between 2-layer and 3-layer HiQ codes}
\begin{center}
\begin{tabular}{ m{2.8cm}| K{1.2cm} K{1.2cm} K{1.2cm} K{1.2cm} K{1.8cm} K{1.8cm} }
\hline
\multirow{2}{*}{}  & \multicolumn{2}{ c }{\textbf{2900 bytes}} & \multicolumn{2}{ c }{\textbf{4500 bytes}}
& \textbf{5800 bytes} & \textbf{8900 bytes} \\
\cline{2-7}
& \textbf{2-layer} & \textbf{3-layer} & \textbf{2-layer} & \textbf{3-layer} & \textbf{2-layer} & \textbf{3-layer} \\
\hline
\hline
\textbf{QR code dimension} & 125 & 105 & 157 & 125 & 177 & 177 \\ 
\hline
\textbf{Limit size (cm)} & 2.5 & 2.6 & 3.5 & 3.4 & 3.8 & 5.8 \\ 
\hline
\textbf{Number of module} & 15,625 & 11,025 & 24,649 & 15,625 & 31,329 & 31,329 \\ 
\hline
\textbf{Predictions per frame} & 62,500 & 88,200 & 98,596 & 125,000 & 125,316 & 250,632 \\ 
\hline
\end{tabular}
\end{center}
\label{table:layerdensitytradeoff}
}
\end{table*}
}

\begin{table}[t]
\caption{Color prediction under fluorescent light accuracy of different methods}
\begin{center}
\begin{tabular}{p{2cm}ccccc} 
\hline
\textbf{Method (kernel)} & \textbf{Layer 1} & \textbf{Layer 2} & \textbf{Layer 3} & \textbf{Avg} & \textbf{Time}\\ 
\hline
\hline
LSVM (linear) & 0.35\%& 0.66\%& 4.07\%&  1.69\% & 1\\
SVM (linear) & 1.72\%& 0.71\%& 3.16\%& 1.86\% & 2.7\\
LSVM (RBF) & 0.29\%& 0.56\%& 1.85\%&  0.90\% & $\infty$\\
SVM (RBF) & 0.38\%& 1.68\%& 2.01\%&  1.02\% & $\infty$\\
{QDA} & 0.32\%& 0.60\%& 1.86\%&  0.93\% & 10.7\\
Decision Forest & 0.55\%& 1.47\%& 3.07\%&  1.70\% & $\infty$\\%
{LSVM (Poly-3)} & 0.28\%& 0.57\%& 2.00\%&  0.95\% & 6.7\\
LSVM (Poly-2) & 0.32\%& 0.59\%& 2.60\%&  1.17\% & 3.3\\
\hline
\multicolumn{6}{c}{\footnotesize{\enquote{$\infty$} means the algorithm is too heavy-weight for mobile implementation.}}
\end{tabular}
\end{center}
\label{table:layeredscheme}
\end{table}

We implement the HiQ code generator based on an open-source barcode processing library, ZXing. For fair comparison between HiQ and PCCC where the proposed color QR codes are inherently 3-layer, we generate 5 high-capacity 3-layer color QR codes with different data capacities (excluding redundancies from error correction mechanism) which are 2787 bytes, 3819 bytes, 5196 bytes, 6909 bytes and 8859 bytes (maximum for a 3-layer HiQ code). In order to test the limit of each approach, all color QR codes are embedded with \textit{low} level of error correction in each layer. By using a common color printer (Ricoh Aficio MP C5501A), we print each generated HiQ code on ordinary white paper substrates in different printout sizes, $30\,mm$, $40\,mm$, $50\,mm$ and $60\,mm$ (for simplicity, we use the length of one side of the square to represent the printout size), and in two different printout resolutions, $600\,dpi$ and $1200\,dpi$. To simulate the normal scanning scenario, the samples are captured by different users under several typical lighting conditions: indoor, outdoor (under different types of weather and time of a day), fluorescent, incandescent, and shadowed (both uniform and nonuniform cases are considered). Moreover, we capture the images using eight types of popular smartphones (see Table \ref{table:phone} for details).

\begin{figure*}[t]
\centering
\includegraphics[width=0.95\textwidth]{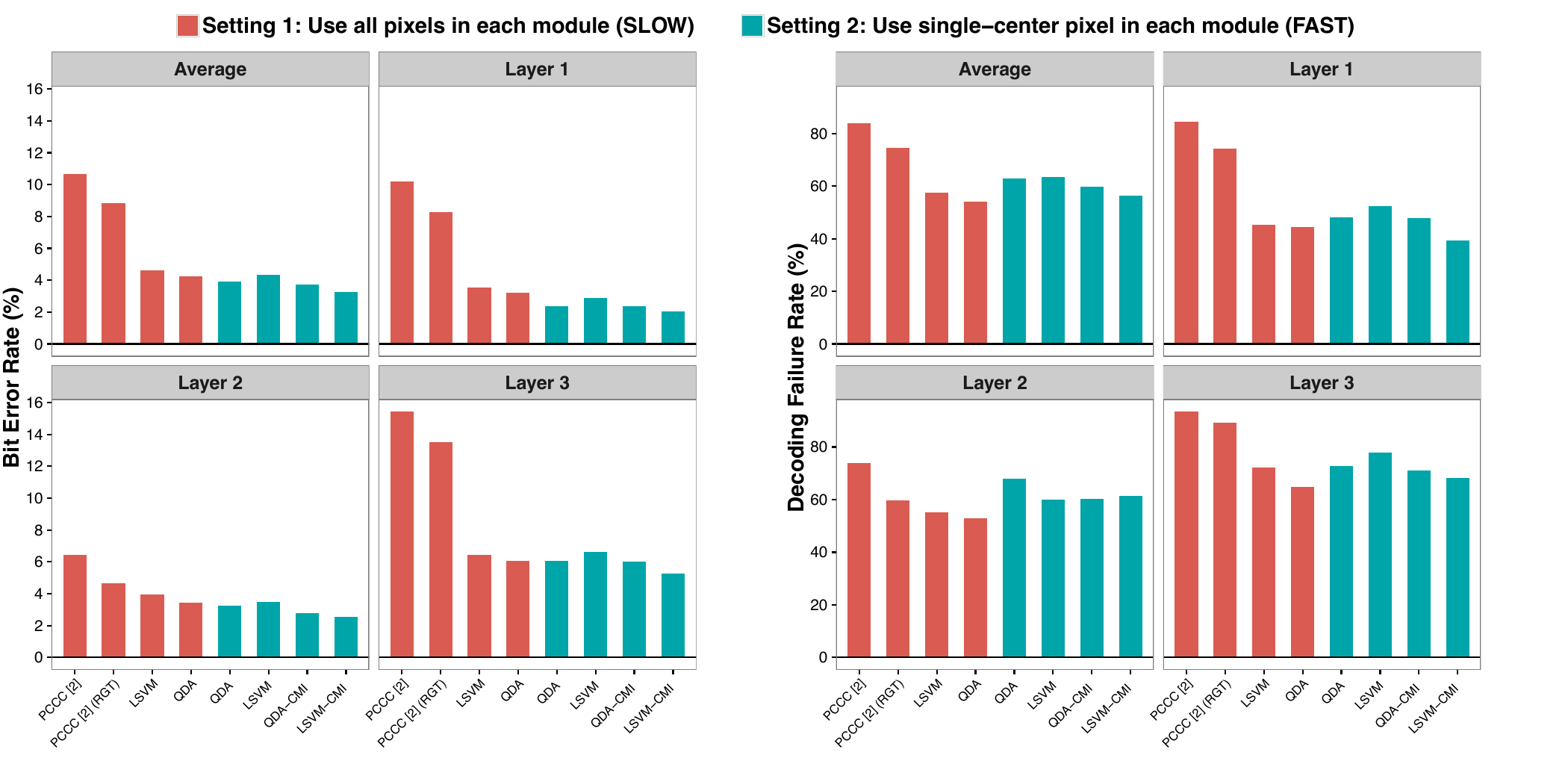}
\caption{Bit error rate (left) and decoding failure rate (right) of different color recovery methods on CUHK-CQRC. \textbf{Setting 1}: four color recovery methods (PCCC \cite{blasinski2013per}, PCCC with RGT, QDA and LSVM) are performed on every pixel of a captured image. \textbf{Setting 2}: four color recovery methods (QDA, LSVM, QDA-CMI and LSVM-CMI) are performed only on the center pixel of each color module.}
\label{fig:rst_DSR_BER}
\end{figure*}

\subsection{Implementation Details}
Although the QR codes in CUHK-CQRC are generated using the proposed encoder (Section \ref{sec:HiQ}), the color QR codes are also compatible with the PCCC decoder. Both HiQ and PCCC are implemented using the decoder part of a popular monochrome QR code implementation, ZXing codebase \cite{mackintosh2012zxing}. As suggested in Table \ref{table:layeredscheme}, LSVM with different kernels has similar performance in the first two layers, so in our implementation of LSVM and LSVM-CMI, we use linear kernel in the first two layers and polynomial kernel of degree three in the third layer to reduce latency.

Since the parameters of HiQ are learned offline, we train the color classifier of HiQ using data sampled from CUHK-CQRC prior to conducting experiments. We select 65 images of color QR codes which cover different lighting conditions, phone models, print resolutions and formats (i.e., photo and preview) for training and use the rest for testing. To collect the color data from the HiQ code images, we use a human-assisted labeling approach. To be more specific, given a captured image of a HiQ code, instead of manually labeling the color pixel by pixel, we only manually input the positions of the markers of the HiQ code and apply the existing geometric transformation algorithm to sample the color data from each square module of which we actually know the ground-truth color. In this way, we can substantially cut down the manual labeling effort while managing to collect over 0.6 million of labeled color-data modules under a wide range of real-world operating/ lighting conditions. Such a rich set of labeled data plays an important role in boosting color classification performance for our learning-based decoding algorithm.

\subsection{Comparing Different Color Classifiers}\label{sec:classifierCompare}
In this section, 
we evaluate the color recovery performance of different machine learning techniques, including LSVM (without incorporating CMI cancellation), one-vs-all SVM, QDA \cite{srivastava2007bayesian} and decision forests \cite{criminisi2011decision} on real-operating color data (one million pixel samples). Table \ref{table:layeredscheme} presents the results. Linear, polynomial (of degree 2 and 3, denoted as Poly-2 and Poly-3, respectively) and RBF kernels are tried in SVM implementation. For random forests, we use depth-9 trees and train 100 of them by using 5 random splits when training each weak learner.
According to Table \ref{table:layeredscheme}, LSVM with RBF kernel appears to be the best choice for our application considering accuracy, but using kernel techniques in SVM on a mobile application is too time-consuming. Alternatively, one can either approximate the RBF kernel via explicit feature mapping \cite{rahimi2007random}\cite{vedaldi2012efficient}, or map the original feature into a slightly higher dimensional space using an as-good kernel such as a low-degree polynomial kernel \cite{chang2010training}. For decoding quickness, we choose the latter in our implementation. Considering speed and accuracy, QDA and LSVM (Poly-3) are more favorable than others. Between QDA and LSVM, QDA is of higher accuracy while LSVM has lower processing latency.

{\color{black}
\subsection{Analysis of Layer-Density Trade-Off}\label{sec:layer_density_tradeoff}
As we discussed in Section \ref{sec:HiQ}, there exists a trade-off between layer and density, namely, given the same amount of user data, a HiQ code with more layers has less module density but it also has more difficulties in color recovery. In this section, we study the layer-density trade-off by comparing the performance of 2-layer and 3-layer HiQ codes.

We print 2-layer and 3-layer HiQ codes of 2900, 4500, 5800 and 8900 bytes of user data and scan these HiQ codes using Nexus 5 under fluorescent lighting. To quantify the performance, we use \textit{limit size} (i.e., the smallest printout size of the QR code that can be decoded) and \textit{prediction per frame (PPF)} as metrics. We use PPF to measure the decoding latency of the HiQ codes instead of scanning time because PPF does not vary with the printout size. In this experiment, we use QDA as the color classifier for both 2-layer (4 colors) and 3-layer (8 colors) HiQ codes. Table \ref{table:layerdensitytradeoff} lists the experimental results. It is shown that given the same content sizes, 2-layer and 3-layer color QR codes have similar limit sizes, and decoding a 2-layer color QR code consumes less time in each frame. Besides, it is also shown that given the same QR code dimension, 2-layer HiQ codes can be denser than 3-layer ones (smaller limit size), which also indicates the difficulties brought by adding one extra layer.
}
\subsection{Evaluation of HiQ on CUHK-CQRC}\label{sec:cqrc_eva}
In this section, we evaluate the performance of HiQ by comparing it with the baseline method, PCCC from \cite{blasinski2013per}, using CUHK-CQRC. Since PCCC performs color recovery on each pixel of a captured image before applying local binarization (Setting 1), while in the proposed QDA-CMI and LSVM-CMI we only perform color recovery on the center pixel of each color module (Setting 2). Performing color recovery on every pixel helps decoding because binarization can benefit from neighboring pixels, but it is prohibitively time-consuming for practical consideration as a captured image usually consists of more than one million pixels. For fair comparison, we conduct two groups of experiments under the above two settings. In Setting 1, we compare PCCC with HiQ which uses QDA and LSVM (Poly-3) as the color classifier to show that our framework actually beats the baseline even without adopting CMI cancellation techniques. In Setting 2, by comparing QDA-CMI and LSVM-CMI with QDA and LSVM, we show the superiority of the proposed QDA-CMI and LSVM-CMI. 

\begin{table*}[t]
\caption{Scanning performance of different color recovery algorithms using iPhone 6s Plus}
\begin{center}
\begin{tabular}{K{1.8cm} |p{2.8cm} K{1.8cm} K{1.8cm} K{1.8cm} K{1.8cm}} 
\hline
\multicolumn{2}{c}{\textbf{}} & \textbf{QDA} & \textbf{LSVM} & \textbf{QDA-CMI} & \textbf{LSVM-CMI} \\
\hline
\hline
\multirow{3}{*}{\shortstack{\textbf{3819 bytes} \\ $\mathbf{35\times35 \,{mm}^2}$}} 
& Number of frames & 1.44 & 1.86 & 1.31 & 1.13\\ \cline{2-6}
& Overall latency (ms) & 375.06 & 372.83 & 372.49 & 264.29\\ \cline{2-6}
& Time per frame (ms) & 260.91 & 200.22 & 283.41 & 234.09\\
\hline
\hline
\multirow{3}{*}{\shortstack{\textbf{5196 bytes} \\ $\mathbf{40\times40\,{mm}^2}$}}
& Number of frames & 1.40 & 1.44 & 1.33 & 1.33\\ \cline{2-6}
& Overall latency (ms) & 435.32 & 308.36 & 479.33 & 365.16\\ \cline{2-6}
& Time per frame (ms) & 310.94 & 214.14 & 359.50 & 275.47\\
\hline
\hline
\multirow{3}{*}{\shortstack{\textbf{6909 bytes} \\ $\mathbf{50\times50\,{mm}^2}$}} 
& Number of frames & 3.65 & 5.51 & 1.61 & 1.24\\ \cline{2-6}
& Overall latency (ms) & 1323.77 & 1295.17 & 670.13 & 401.24\\ \cline{2-6}
& Time per frame (ms) & 362.30 & 234.75 & 415.48 & 323.81\\
\hline
\hline
\multirow{3}{*}{\shortstack{\textbf{8097 bytes} \\ $\mathbf{38\times38\,{mm}^2}$}} 
& Number of frames & - & - & - & 4.80\\ \cline{2-6}
& Overall latency (ms) & - & - & - & 2146.23\\ \cline{2-6}
& Time per frame (ms) & - & - & - & 447.13\\
\hline
\multicolumn{6}{l}{\footnotesize{\enquote{-} means cannot be successfully decoded.}}
\end{tabular}
\end{center}
\label{table:qdalsvm}
\end{table*}

The results presented in Fig. \ref{fig:rst_DSR_BER} show that, in Setting 1, HiQ with QDA reduces BER from 10.7\% to 4.3\% and DFR from 84\% to 54\% (the decoding success rate is increased by 188\%) compared with PCCC. Moreover, we also apply robust geometric transformation (RGT) on PCCC, denoted as \textit{PCCC (RGT)}. RGT is shown to reduce the DFR and BER of PCCC by 12\% and 18\%, respectively. As for QDA and LSVM, they are comparable in decoding performance as also indicated in Table \ref{table:layeredscheme}. The results also indicate that, across all of the 3 schemes under test, the third layer (yellow channel in PCCC) always yields the worst performance. This is because the color classifier has poor performance in distinguishing between yellow and white which are encoded as $001$ and $000$ respectively in the codebook (see Fig. \ref{fig:cqr_encode_decode}), especially under strong light. Likewise, the classifier performs poorly in distinguishing blue ($110$) and black ($111$) under dim light. The combined effect is that the third layer often cannot be decoded reliably during poor lighting conditions. Fortunately, it is possible to apply a higher error correction level on the third layer to compensate for the higher classification error rate, which will be investigated in Section \ref{sec:exp_mobile}.

In Setting 2, 
both QDA-CMI and LSVM-CMI models outperform their base models (QDA and LSVM) in both DFR and BER. In particular, LSVM-CMI reduces the BER of LSVM by 16.8\% while QDA-CMI reduces the BER of QDA by 6.8\%. Compared with LSVM-CMI, the performance of QDA-CMI is inferior and QDA-CMI is shown to have less significant improvement over its base model. This is probably due to the fact that the objective function of QDA-CMI (see Eq. \eqref{eq:mle}) is in general non-convex while the objective of LSVM-CMI is convex. Consequently, the optimization of QDA-CMI (Algorithm \ref{algo:QDACMI}) is likely to be stuck at a local optimum and yields a suboptimal solution. Yet another limitation of QDA-CMI is that the data points lying along the edges of the Gaussian distribution unfavorably affect the optimization. In other words, a small number of data points can significantly change the value of the objective function while having negligible effects on reducing the prediction error.

Although the overall decoding failure rate of our method (over 50\%) may look high, if one frame fails, the smartphone can instantly capture a new image and start a new round of decoding until the QR code is successfully decoded. Therefore, besides accuracy, processing latency also serves as a key aspect in measuring the practicability of one approach. In the following, we will study the performance of different methods in real-world practice, considering both accuracy and processing latency.

\begin{figure*}[t]
\centering
\subfigure[Experimental Results of LSVM using iPhone 6 Plus.]{
\label{fig:ip6p}
\includegraphics[height=.205\textwidth]{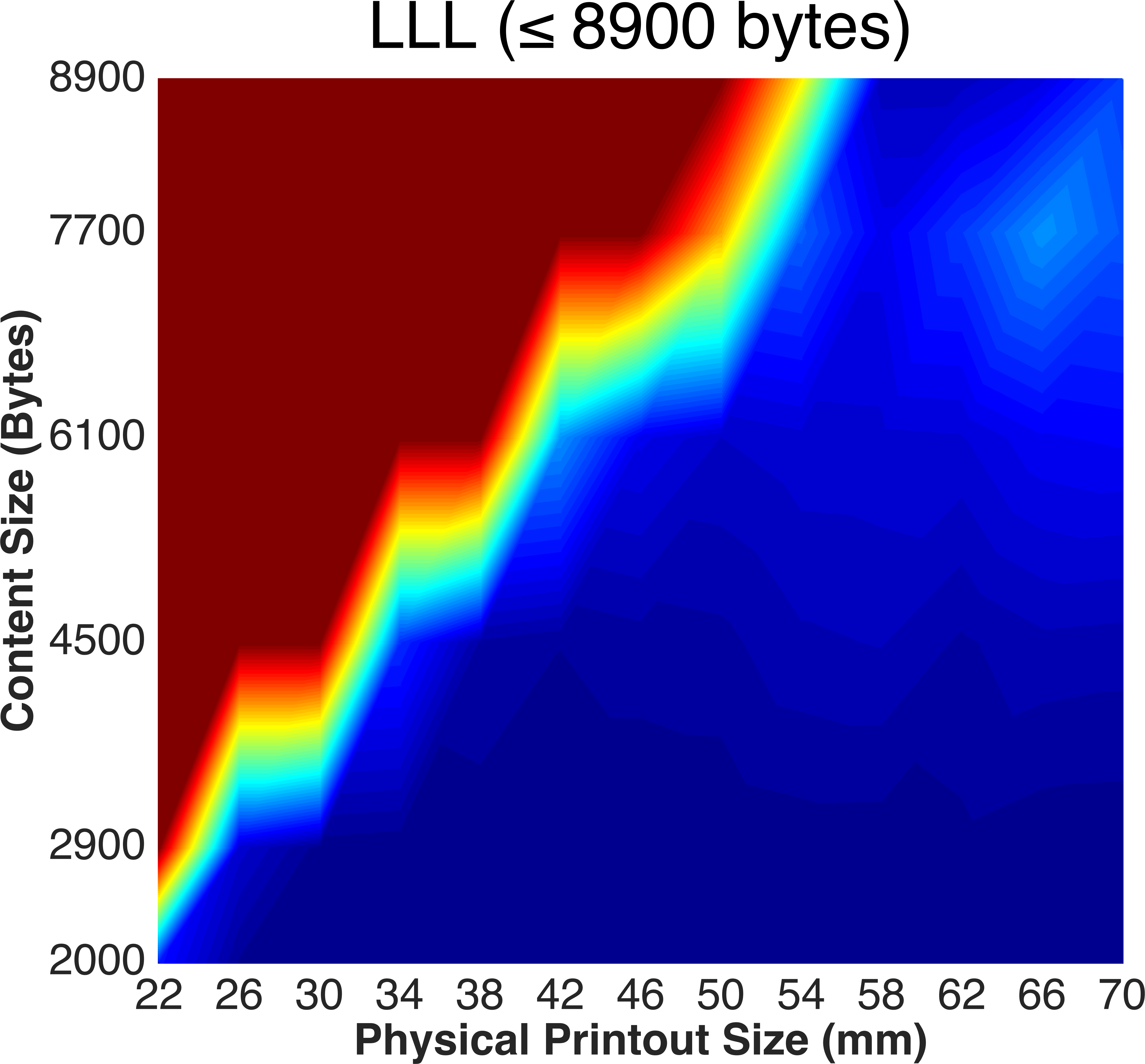}
\includegraphics[height=.205\textwidth]{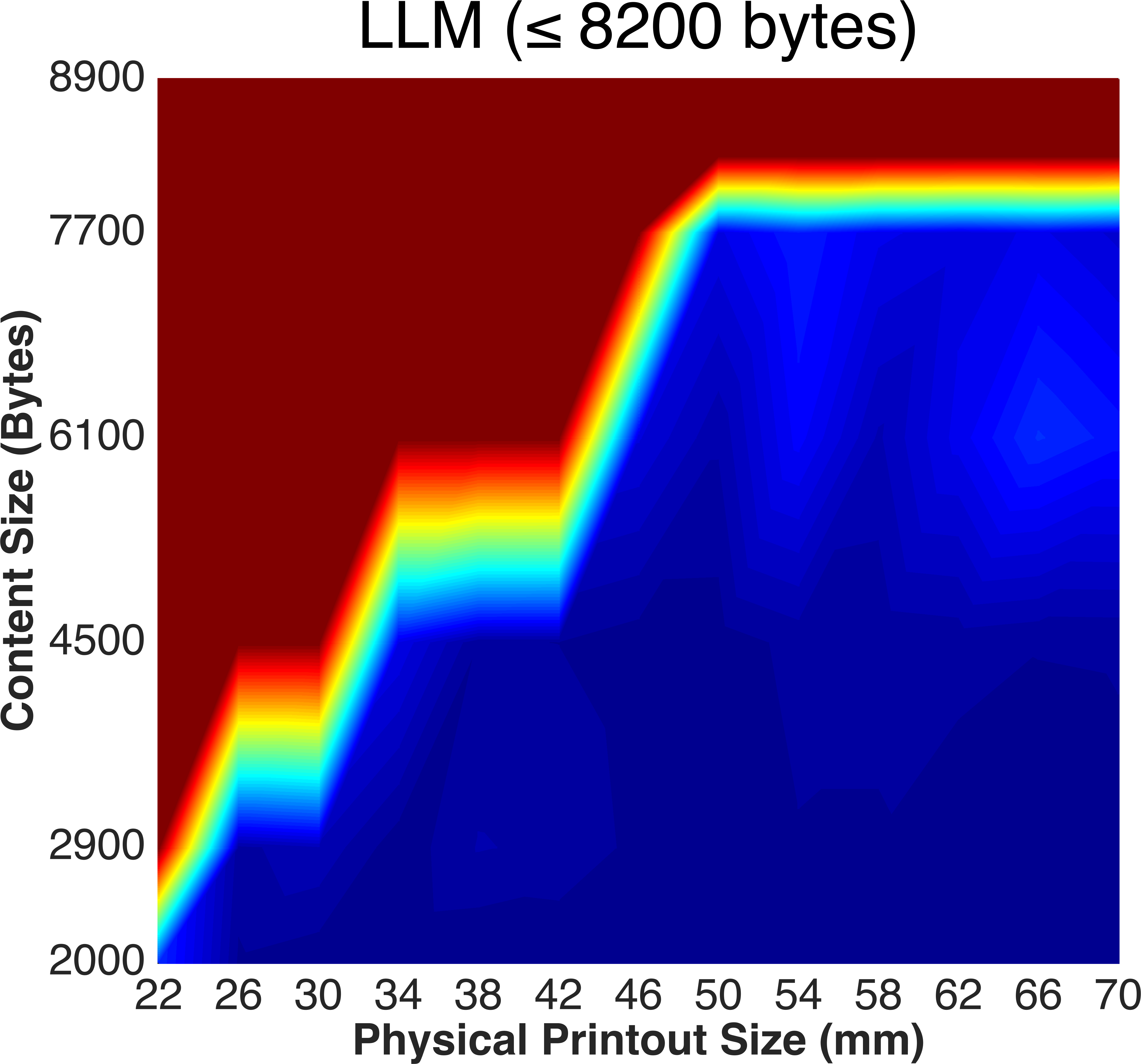}
\includegraphics[height=.205\textwidth]{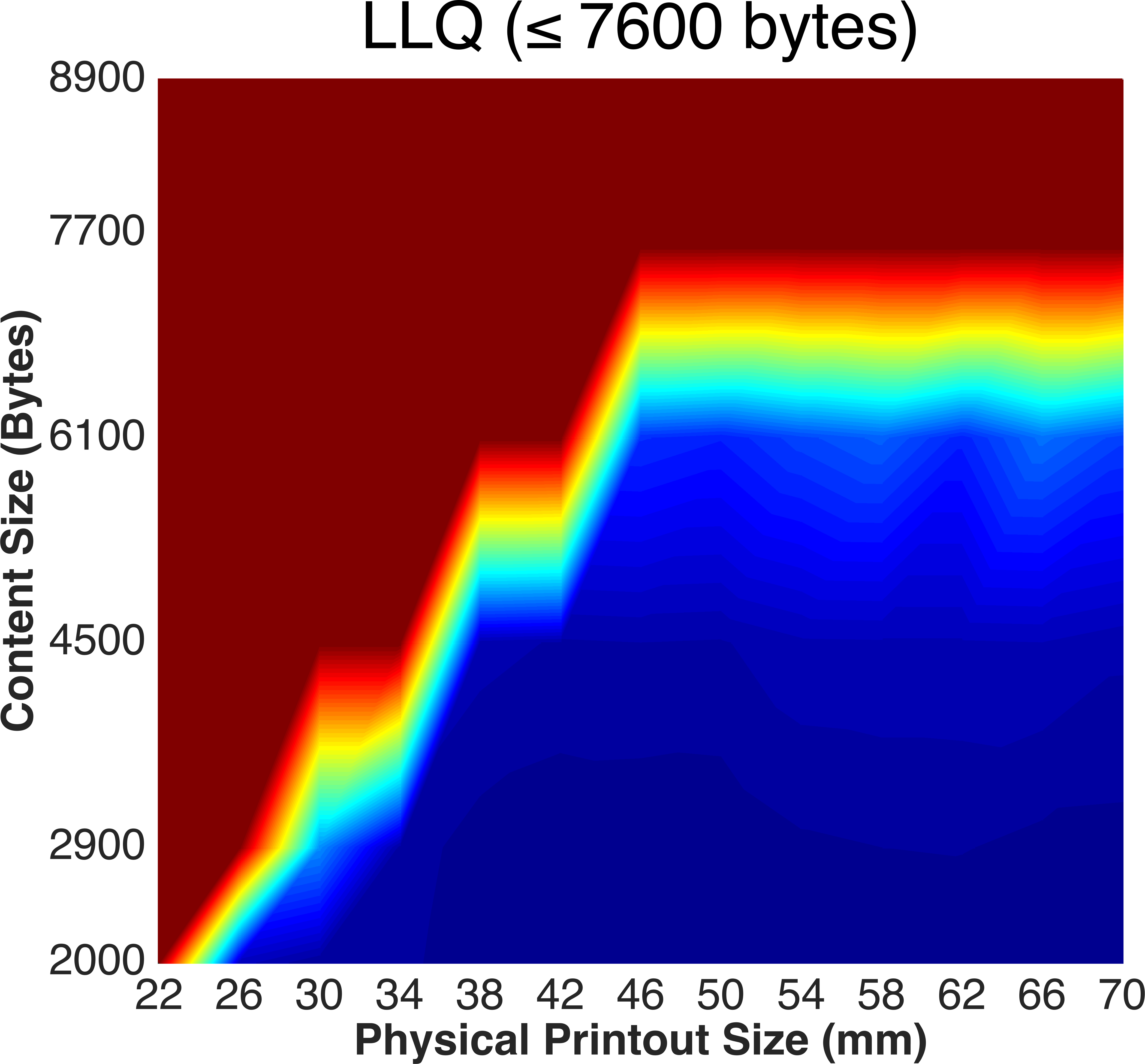}
\includegraphics[height=.205\textwidth]{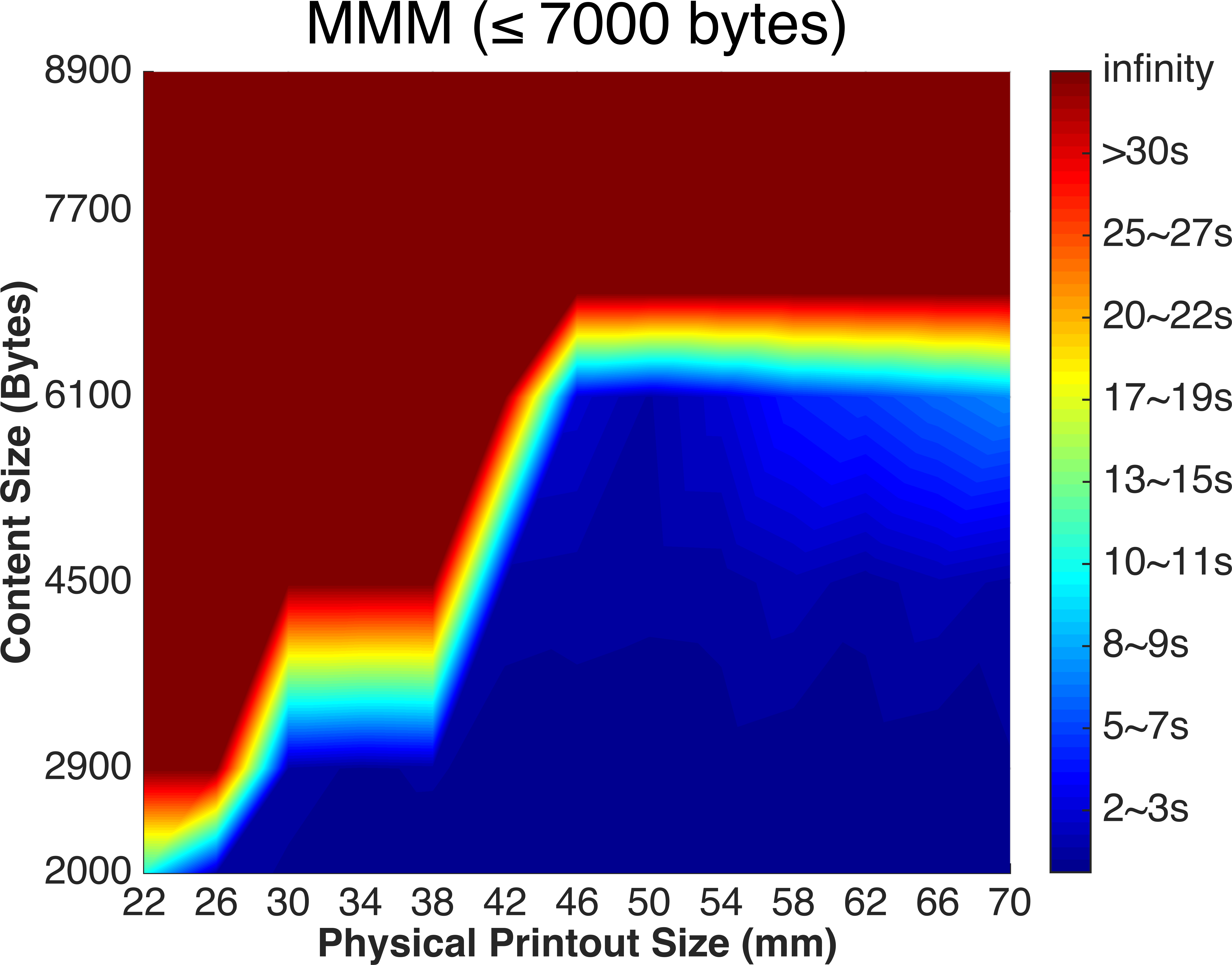}}


\subfigure[Experimental Results of LSVM-CMI using iPhone 6 Plus.]{
\label{fig:ip6p_cmi}
\includegraphics[height=.193\textwidth]{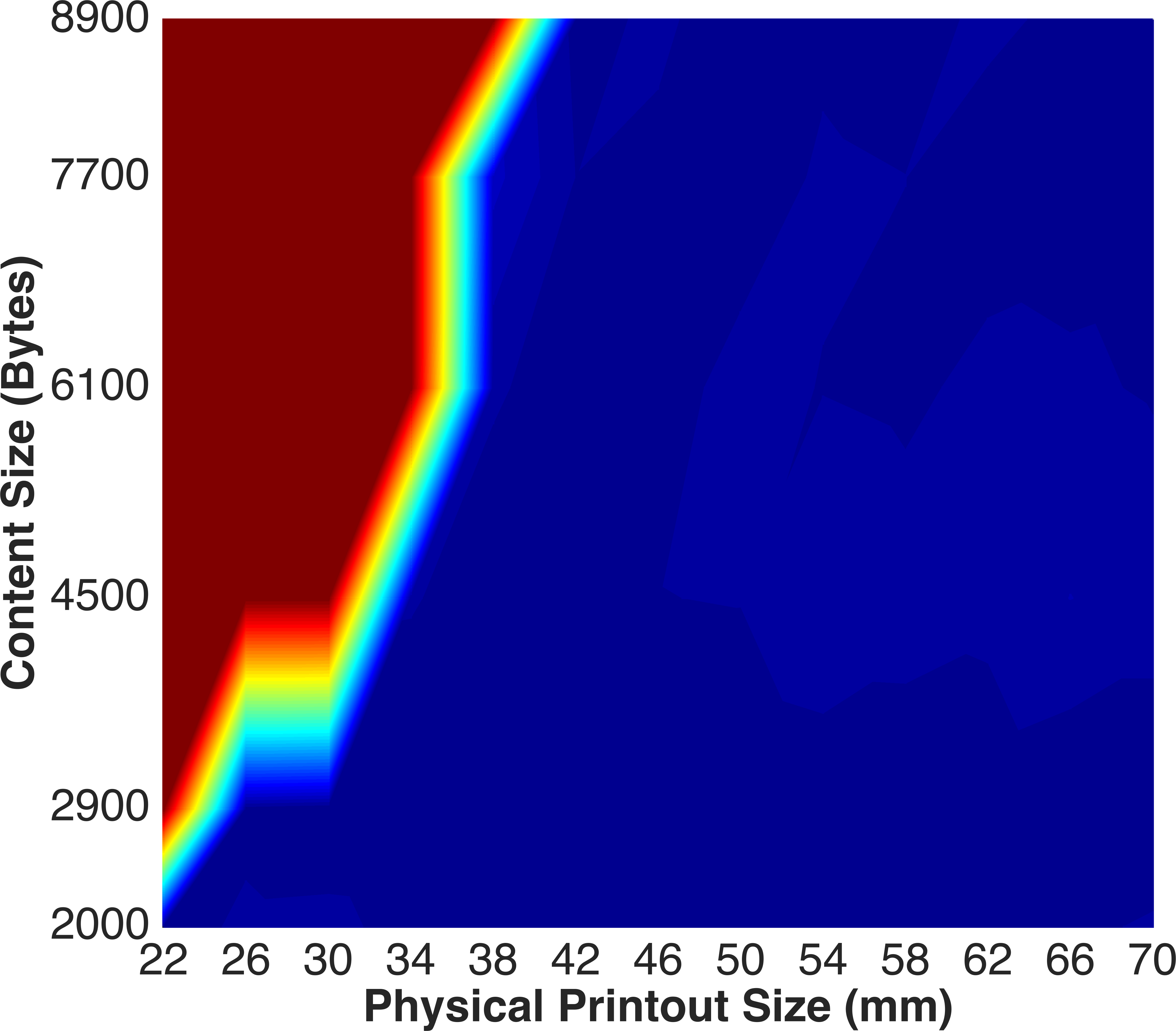}
\includegraphics[height=.193\textwidth]{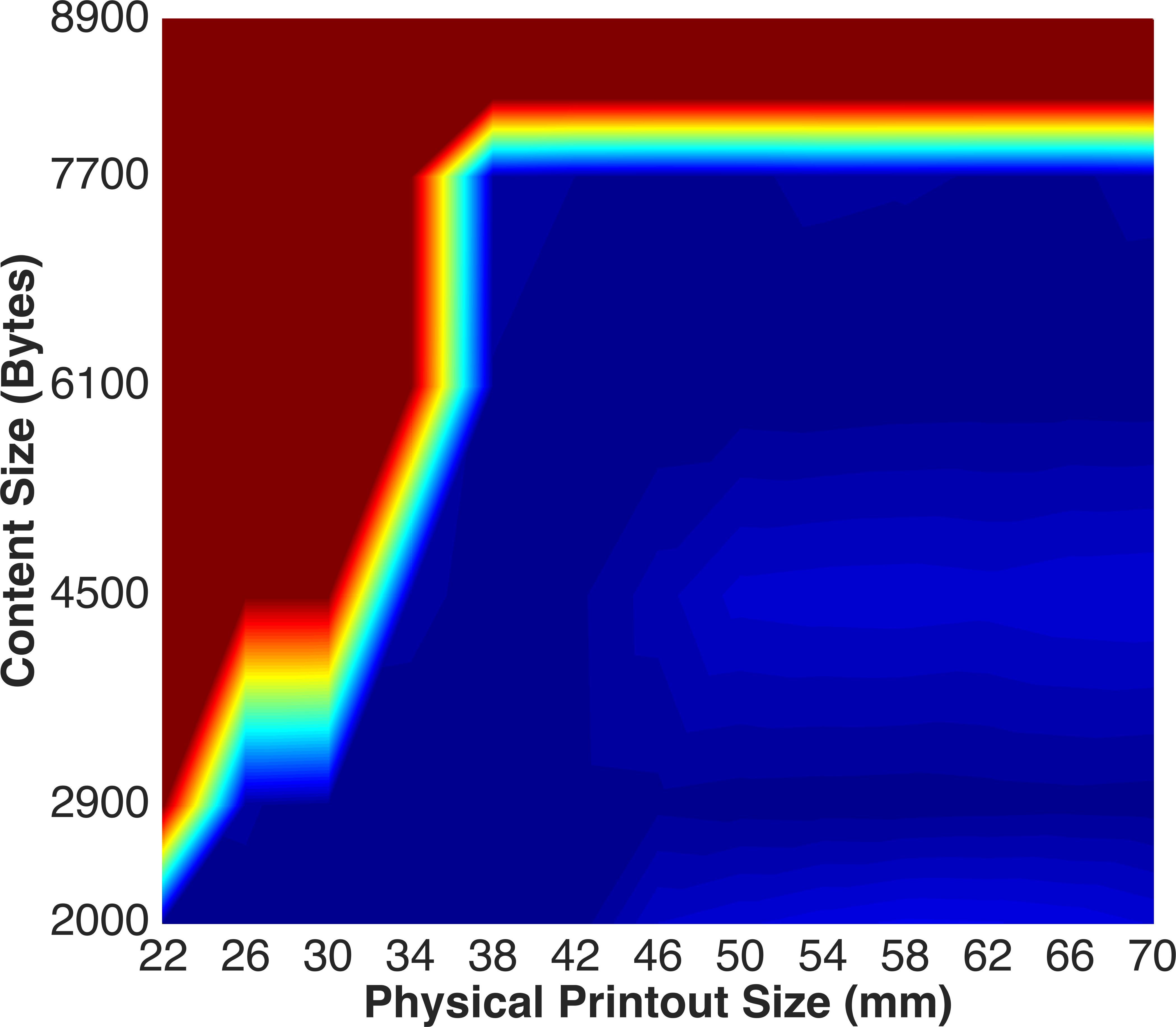}
\includegraphics[height=.193\textwidth]{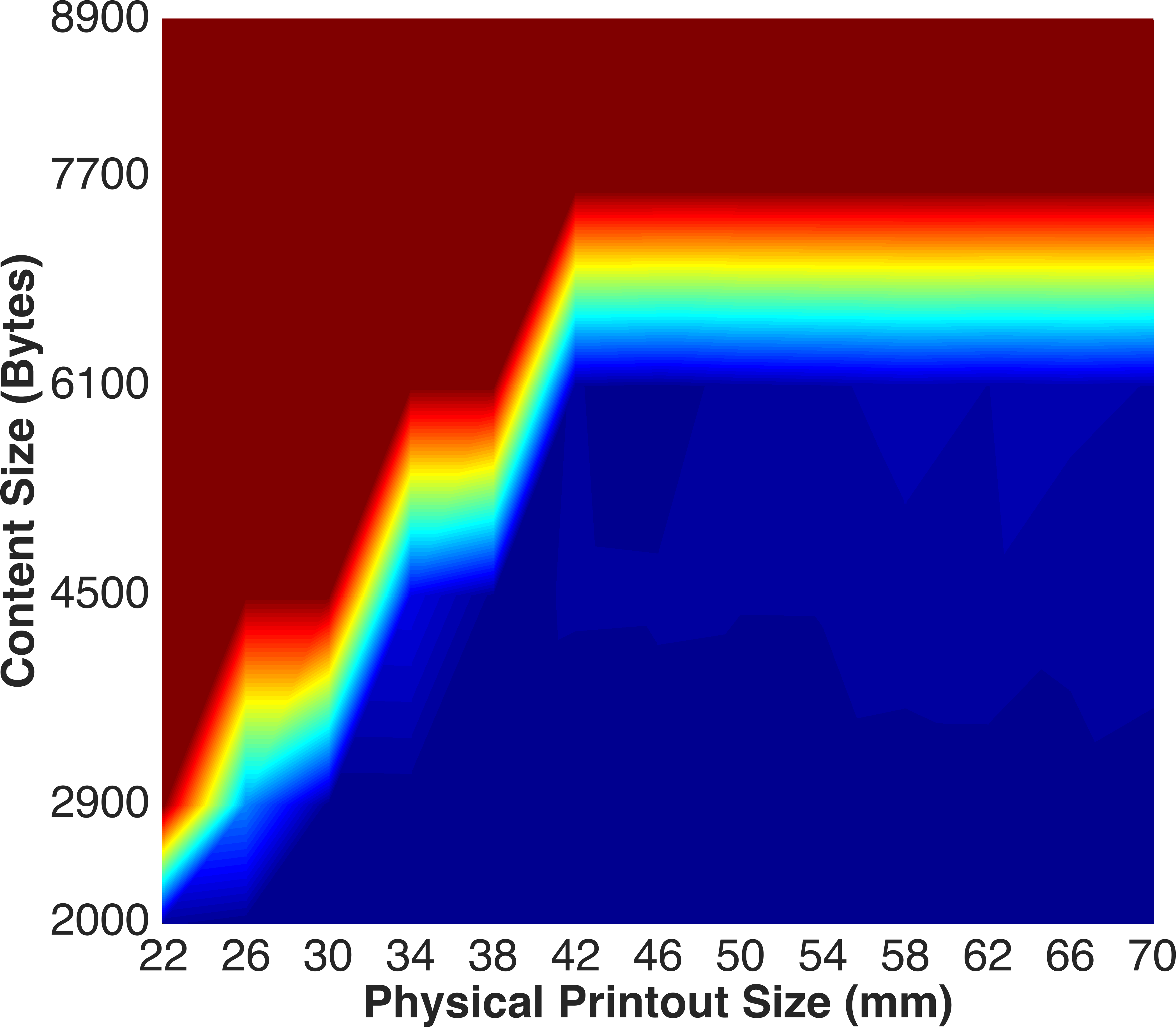}
\includegraphics[height=.193\textwidth]{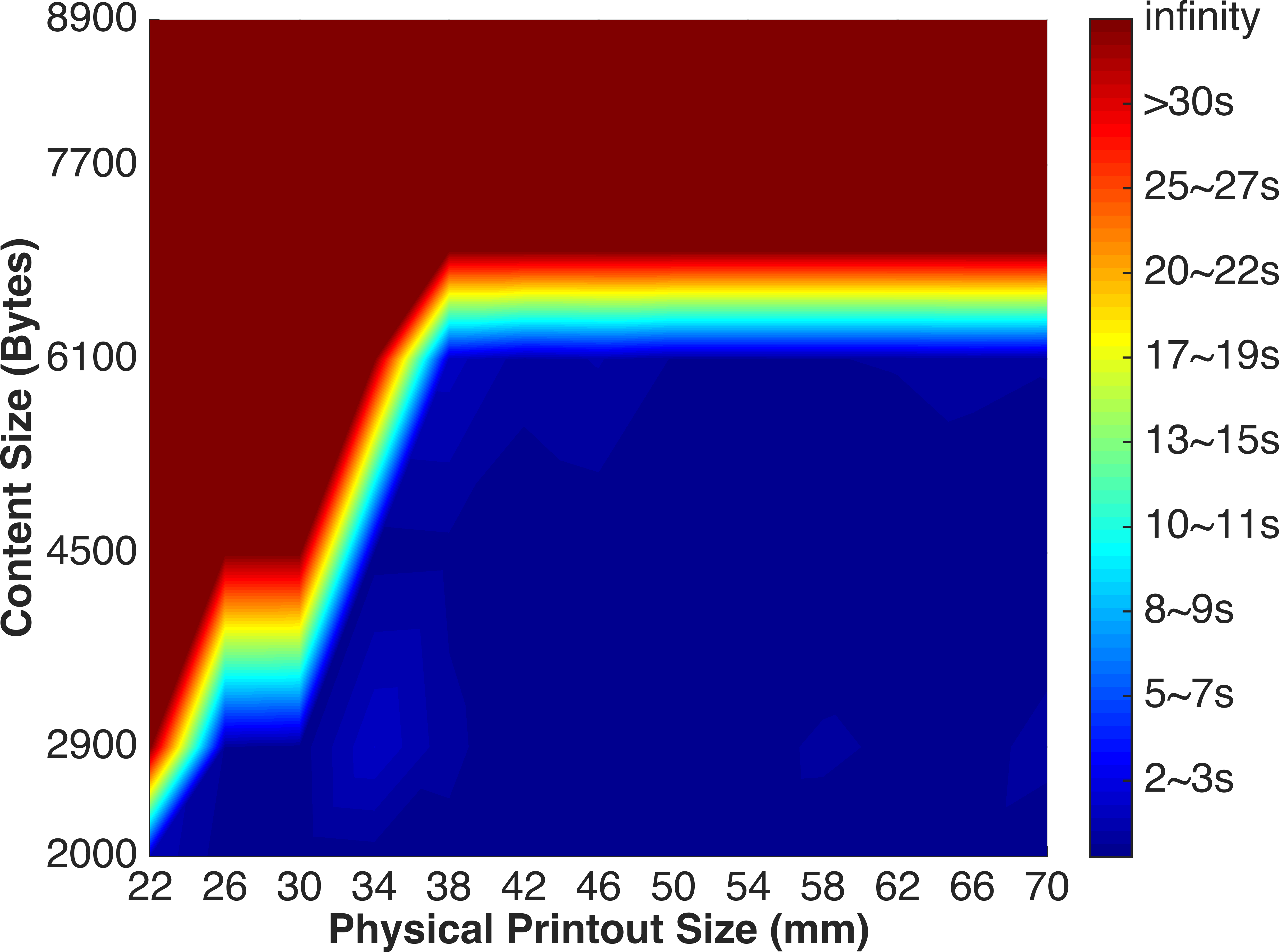}}

\caption{The 90th percentile of the scanning time of 30 trials (in ascending order). From left to right, the four columns use HiQ codes of different error correction levels---LLL ($\le 8900$ bytes), LLM ($\le 8200$ bytes), LLQ ($\le 7600$ bytes), MMM ($\le 7000$ bytes), respectively. The scanning time of the HiQ codes beyond its maximal capacity is set to infinity.
}
\label{fig:heatmap}
\end{figure*}

\subsection{Evaluation of HiQ on Mobile Devices}\label{sec:exp_mobile}
In this section, we demonstrate the effectiveness of HiQ using off-the-shelf smartphones, which include Google Nexus 5, iPhone 6 Plus and iPhone 7 Plus. We investigate several interesting questions: 
1) Compared with QDA, LSVM is superior in speed but inferior in accuracy (see Table \ref{table:layeredscheme}), so how does their performance differ in real-world mobile applications?
2) How do different color recovery methods proposed in this paper---QDA, LSVM, QDA-CMI and LSVM-CMI---perform in real-world scenario? 
3) What is the impact of different error correction levels on the performance of HiQ? and
4) How does the decoding performance vary with respect to the physical printout size of HiQ codes.
As evaluation metrics, we collect the scanning time of 30 successful scans (i.e., trials where the HiQ code is successfully decoded) for each printout HiQ code and the experiments are conducted in an indoor office environment. A trial is unsuccessful if the code cannot be decoded within 60 seconds, and a HiQ code is regarded as \textit{undecodable} if three consecutive unsuccessful trials occur.


Table \ref{table:qdalsvm} lists the experimental results of different color recovery algorithms in real-world mobile applications by using iPhone 6s Plus as representative. In this experiment, we select several challenging HiQ codes samples of low level of error correction in all layers, different data capacities (3819 bytes, 5196 bytes, 6909 bytes and 8097 bytes) and different printout sizes ($35\times35\,\rm{mm}^2, 40\times40\,\rm{mm}^2$, $50\times50\,\rm{mm}^2$ and $38\times38\,\rm{mm}^2$). The results show that  LSVM takes less time to successfully decode a HiQ code compared to QDA (although LSVM takes more frames to complete decoding a HiQ code, it consumes less time to process a frame). By incorporating CMI cancellation, QDA-CMI and LSVM-CMI reduce the overall latency and the increase of computation time for processing a frame is negligible. More importantly, LSVM-CMI achieves the best performance among all methods. In particular, LSVM-CMI has the lowest overall latency and is the only method that can decode the most challenging HiQ code (8097 bytes, $38\times38\,\rm{mm}^2$) with reasonable computational cost.

\begin{table*}[t!]
\centering
\caption{Execution time of basic blocks in the pipeline of HiQ decoder using Nexus 5} 
\begin{tabular}{C{1.3cm}|c|C{01cm}|C{0.9cm}|C{0.9cm}|C{1.12cm}|C{1.05cm}|C{1.08cm}|C{1.1cm}|C{1.25cm}|C{1.05cm}|C{1.2cm}}
\hline
Number of Modules & Type &Data Capacity &YUV-2-RGB & Binarization & Patterns Detection & Transformation & Color Recovery & Randomization & Error Correction & Time per Frame & Number of Frames\\
\hline
\hline
\multirow{2}*{$137\times137$} & B/W & 1732 bytes & NA & \textbf{110ms (34\%)} & \textbf{112ms (34\%)} & 23ms (7\%) & NA & 20ms (6\%) & 41ms (13\%) & 322ms &  3.0  \\\cline{2-12}

                            & Color & 5196 bytes & \textbf{400ms (27\%)} & 204ms (14\%) & 153ms (10\%) & 14ms (1\%) & \textbf{500ms (34\%)} & 45ms (3\%) & 150ms (10\%) & 1466ms & 4.5 \\
\hline
\multirow{2}*{$157\times157$} & B/W & 2303 bytes & NA & \textbf{104ms (27\%)} & \textbf{123ms (32\%)} & 37ms (10\%) & NA & 38ms (10\%) & 60ms (16\%) & 380ms &  4.3  \\\cline{2-12}

                            & Color & 6909 bytes & \textbf{386ms (24\%)} & 200ms (12\%) & 150ms (9\%) & 20ms (1\%) & \textbf{650ms (40\%)} & 69ms (4\%) & 160ms (10\%) & 1635ms & 6.7 \\
\hline
\multirow{2}*{$177\times177$} & B/W & 2953 bytes & NA & \textbf{112ms (25\%)} & \textbf{138ms (30\%)} & 37ms (8\%) & NA & 50ms (11\%) & 97ms (21\%) & 455ms & 5.6   \\\cline{2-12}

                            & Color & 8859 bytes & \textbf{400ms (20\%)} & 193ms (10\%) & 213ms (11\%) & 25ms (2\%) & \textbf{881ms (44\%)} & 111ms (5\%) & 200ms (10\%) & 2023ms & 9.0 \\
\hline
 \end{tabular}
 \label{PipelineAna}
 \end{table*}

In the following, we evaluate the performance of LSVM and LSVM-CMI and show the effectiveness of HiQ and superiority of LSVM-CMI over LSVM in real-world practice. We conduct experiments using 3-layer HiQ codes with different error correction levels and different content sizes using iPhone 6 Plus and iPhone 7 Plus. More specifically, we choose six different content sizes, 2000 bytes, 2900 bytes, 4500 bytes, 6100 bytes, 7700 bytes and 8900 bytes (approximately). For each content size, we generate color QR codes using four different levels of error correction which are denoted by 4 triplets, LLL, LLM, LLQ and MMM. {\color{black}Note that the data capacity of a QR code will be reduced if higher error correction level is used. Therefore, we cannot apply the four different error corrections for all content sizes. For instance, with a content size of 8900 bytes, we can only use LLL, and for content size of 7700 bytes, only LLL and LLM are feasible.} Each symbol (L, M and Q) of the triplet represents different level of error correction (low, median and quartile
\footnote{Low, median and quartile level of error correction can correct up to 7\%, 15\% and 25\% codewords, respectively.}
) applied for the corresponding layer.
We try different error correction levels in the third layer as it has shown to be the most error-prone layer (see Table \ref{table:layeredscheme}). 
Each generated HiQ code is printed in different printout sizes ranging from $22\,\rm{mm}$ to $70\,\rm{mm}$.

The results presented in Fig. \ref{fig:heatmap} show that HiQ decoder can, in most of the cases, successfully decode the HiQ code within 5 seconds with small variance (see the supplement for more detailed results). Fig. \ref{fig:heatmap} also conveniently shows the smallest printout sizes of the color QR codes with different content sizes that can be decoded in a reliable and rapid manner. 
Comparing the performance of different error correction levels, we can see that, in most cases, LLL and LLM outperform LLQ and MMM in terms of the smallest decodable printout size given the same content size. This suggests that it is not helpful to apply error correction level that is higher than $M$ since higher level of error correction not only increases the error-tolerant ability, but also increases the data density by adding more redundancies.
More importantly, the comparison between the first two rows of Fig. \ref{fig:heatmap} demonstrates the effectiveness of the proposed CMI cancellation. In particular, LSVM-CMI not only outperforms LSVM in overall scanning latency, but also in minimum decodable printout size of the HiQ codes, e.g, with LSVM-CMI reduces the minimum printout size of 7700-byte HiQ code by 24\% (from $50\,\rm{mm}$ to $38\,\rm{mm}$). {\color{black}We also conducted experiments using iPhone 7 Plus to evaluate our approach across different devices (see detailed results in the supplement). We find that HiQ can achieve similar performance for iPhone 6 Plus and iPhone 7 Plus, though iPhone 7 Plus is faster because of a better processor.}

Lastly, 
we evaluate the performance of spatial randomization for HiQ (see Section \ref{sec:randaccu}) on a mobile device. Using Nexus 5 as a representative phone model, we compare the decoding performance on randomized and non-randomized HiQ codes of different content sizes and printout sizes. Specifically, we choose 6100-byte and 7700-byte HiQ code samples to do the evaluation.
Fig. \ref{fig:acmcompare} presents the block accumulation behavior when decoding HiQ codes with and without randomization. One can observe that the randomized samples have higher starting point of successful blocks percentage and higher accumulation speed, while the decoding of the original samples easily fails. We also found that randomization also improves the decodability of HiQ codes, especially high-capacity ones. For instance, for 6600-bytes HiQ codes, the use of randomization pushes the minimum decodable printout size from $46\rm{mm}$ to $42\rm{mm}$, and cuts down the average scanning latency by over 50\% given certain printout sizes.

\begin{figure}[t]
\centering
\includegraphics[width=0.48\textwidth]{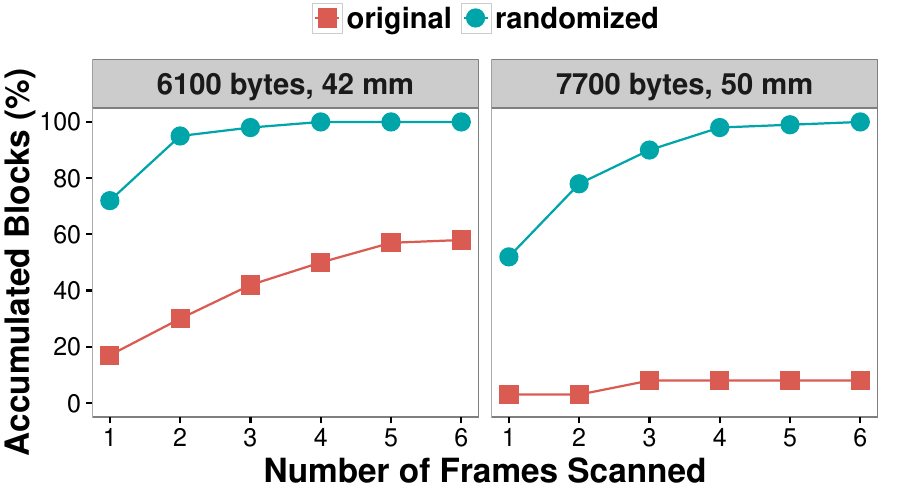}
\caption{Comparison of scanning performance with respect to block accumulation and randomization.}
\label{fig:acmcompare}
\end{figure}

{\color{black}
\subsection{Evaluation of CMI across Printers}
One natural question one may raise is: does CMI models derived from the output of one printer generalize to other printers? In this section, we demonstrate the effectiveness of our CMI model by testing LSVM and LSVM-CMI trained for Ricoh MP C5501A over HiQ codes printed by two different printers: HP DeskJet 2130 (low-end inkjet printer) and Ricoh MP C6004 (high-end laser printer). In particular, we print HiQ codes of three different capacity--2000 bytes, 2919 bytes, 4498 bytes and 6106 bytes--in different printout sizes ranging from $22\rm{mm}$ to $54\rm{mm}$ using the aforementioned two printers and use iPhone 6s Plus as the scanning device.

Fig. \ref{fig:printer} presents the experimental results of LSVM and LSVM-CMI. For both printers, LSVM-CMI is more efficient than LSVM in terms of overall scanning time and LSVM-CMI can decode denser HiQ codes than LSVM. For example, for a 4395-byte HiQ code, LSVM-CMI can decode up $30\rm{mm}$, while the smallest decodable printout size for LSVM is $3.2\rm{mm}$. And for 6105-byte HiQ code, LSVM-CMI can decode it with printout size $54\rm{mm}$ while LSVM cannot. More importantly, this also implies the color mixing parameters of CMI model learned for one printer can be directly applied to other printers.
However, for HP printer, both LSVM and LSVM-CMI suffers huge drop in scanning performance compared with Ricoh printer. Both LSVM and LSVM-CMI can only decode several of the printed HiQ codes (see Fig. \ref{fig:hp}). Two reasons lead to this: 1) As a lower-end color printer, HP DeskJet 2130 does not produce as good printing quality as Ricoh MP C6004; 2) The color tone of the colorants used by HP printer and Ricoh printer differs significantly, while our models are trained using data from Ricoh MP C5501A which uses similar ink as Ricoh MP C6004. Fortunately, we can address this problem by training models over color data from different types of color ink, and we leave it for  future investigation.
}

\begin{figure}[t]
\centering
\subfigure[Results of Ricoh MP C6004.]{\label{fig:c6004}
  \includegraphics[width=0.48\textwidth]{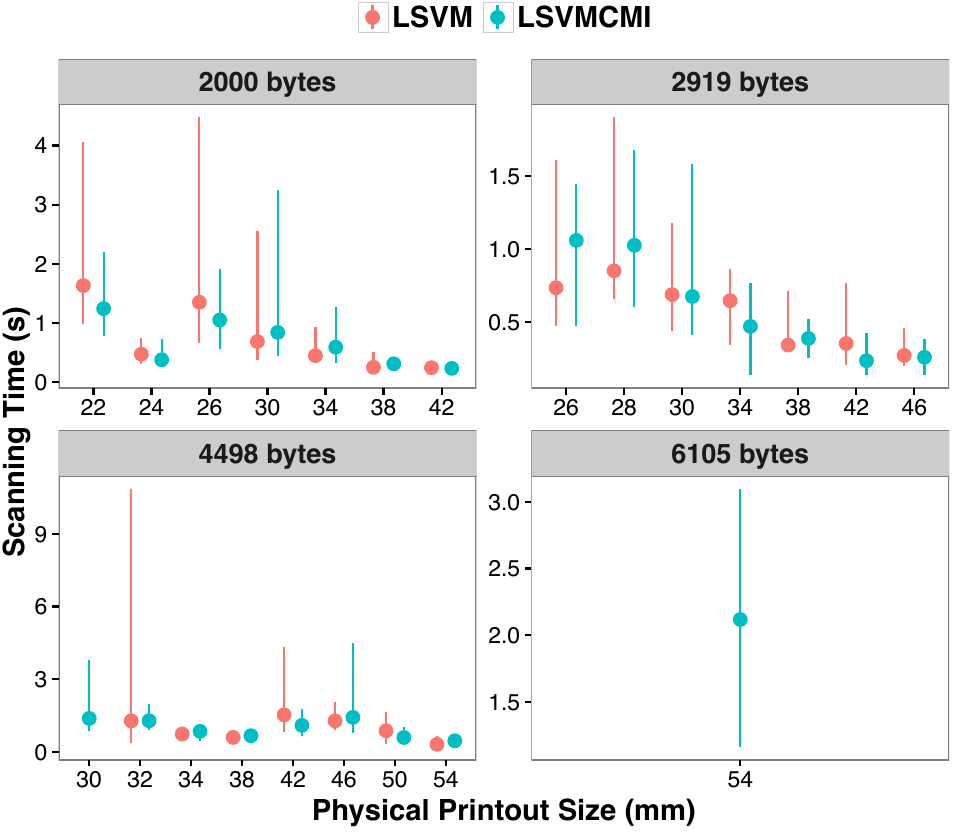}
\vspace{-1cm}
}
\subfigure[Results of HP DeskJet 2130.]{\label{fig:hp}
\includegraphics[width=.48\textwidth]{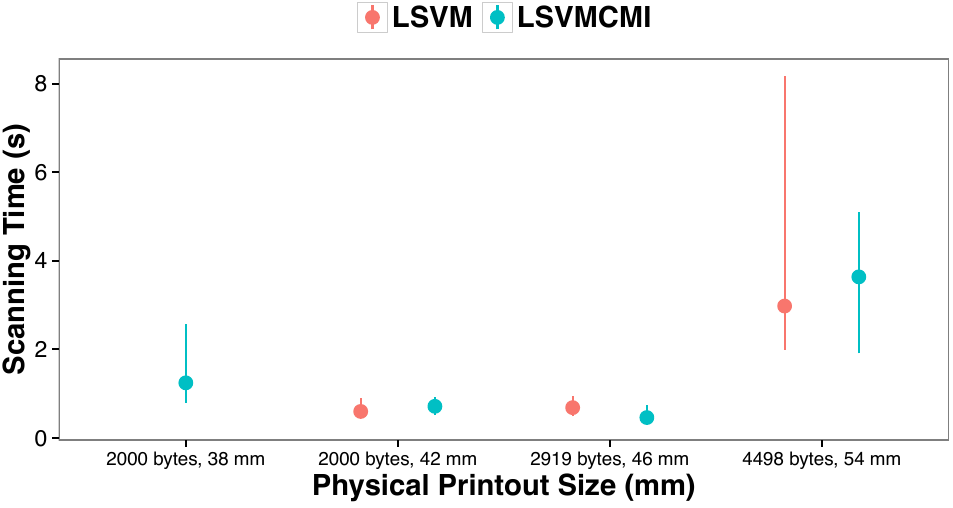}}
\caption{{\color{black}Comparison of the scanning performance of LSVM and LSVM-CMI for Ricoh MP C6004 and HP DeskJet 2130.}} 
\label{fig:printer}
\end{figure}
\subsection{Pipeline Analysis}
We examine the execution time of sequential flows (basic blocks) in the pipeline of our proposed HiQ framework by using monochrome and 3-layer HiQ codes. Specifically, we choose three different versions of QR codes which consist of $137\times137$, $157\times157$ and $177\times177$ modules, respectively. For each HiQ code, we evaluate the execution time of each block by averaging 10 scans using Google Nexus 5.

Observed from Table \ref{PipelineAna}, the most time-consuming parts for HiQ is color recovery and the YUV-to-RGB conversion, taking up around 40\% and 20\%, respectively. Note that YUV-to-RGB conversion is a necessary step for the implementation on Android to transfer the captured image format from YUV to RGB, but not for iOS. Besides, the randomization part only takes up no more than 11\% (120 ms) of the scanning time for both single-layer and 3-layer HiQ codes, which is acceptable in practice.

\section{Conclusion}\label{sec:concl}
In this paper, we have proposed two methods that jointly model different types of chromatic distortion (cross-channel color interference and illumination variation) together with newly discovered chromatic distortion, cross-module color interference, for high-density color QR codes. A robust geometric transformation method is developed to address the challenge of geometric distortion.
Besides, we have presented a framework for high-capacity color QR codes, HiQ, which enables users and developers to create generalized QR codes with flexible and broader range of choices of data capacity, error correction and color, etc. To evaluate the proposed approach, we have collected the first large-scale color QR code dataset, CUHK-CQRC. Experimental results have shown substantial advantages of the HiQ over the baseline approach. Our implementation of HiQ on both Android and iOS and evaluation using off-the-shelf smartphones have demonstrated its usability and effectiveness in real-world practice.
In the future, as opposed to current design where error correction is performed layer by layer, a new mechanism will be developed to share correction capacity across layers by constructing error correction codes and performing correction for all layers as a whole, by which we think the robustness of our color QR code system will be further improved.

\bibliographystyle{IEEEtran}
\bibliography{IEEEabrv,paper}

\end{document}